\newcommand{\cmark}{\ding{51}}  
\newcommand{\xmark}{\ding{55}}  
\newtcolorbox{keybox}{colback=black!2, colframe=black!12, boxrule=0.3pt,
  left=6pt, right=6pt, top=4pt, bottom=4pt}
\newenvironment{smalleq*}{%
  \begingroup
  \footnotesize
  \setlength{\abovedisplayskip}{4pt}%
  \setlength{\belowdisplayskip}{4pt}%
  \setlength{\abovedisplayshortskip}{3pt}%
  \setlength{\belowdisplayshortskip}{3pt}%
  \allowdisplaybreaks
  \begin{align*}
}{%
  \end{align*}
  \endgroup
}
\theoremstyle{definition}
\newtheorem{assumption}{Assumption}
\theoremstyle{plain}
\newtheorem{lemma}{Lemma}
\newtheorem{proposition}{Proposition}
\newtheorem{corollary}{Corollary}
\theoremstyle{remark}
\newcommand{\toesize}{\fontsize{7.5pt}{9.0pt}\selectfont}
\newcommand{\MACI}{\mathrm{MACI}}
\newcommand{\CRIT}{\mathrm{CRIT}}
\newcommand{\CL}{\mathrm{CL}}
\newcommand{\Ov}{\mathrm{O}}
\newcommand{\Q}{\mathit{Q}}            
\newcommand{\Ihat}{\widehat{I}}        
\newcommand{\JS}{D_{\mathrm{JS}}}      
\newif\ifanon             
\begin{document}

%

%

\twocolumn[

\aistatstitle{Multi-Agent Collaborative Intelligence: Dual-Dial Control for Reliable LLM Reasoning}

\aistatsauthor{Edward Y. Chang,  
Ethan Y. Chang}

\aistatsaddress{Stanford University, UIUC} ]

\begin{abstract}
Multi-agent debate often wastes compute by using a fixed adversarial stance, aggregating without deliberation, or stopping on heuristics. We introduce $\MACI$, an active controller with two independent dials that decouple information from behavior: an information dial that gates evidence by quality, and a behavior dial that schedules contentiousness from exploration to consolidation. A moderator tracks disagreement, overlap, evidence quality, and argument quality, and halts when gains plateau. We provide theory-lite guarantees for nonincreasing dispersion and provable termination, with a budget-feasible scheduler. Across clinical diagnosis and news-bias tasks, $\MACI$ improves accuracy and calibration while reducing tokens, and converts residual uncertainty into precision RAG plans that specify what to retrieve next. We use a cross-family LLM judge ($\CRIT$) as a conservative soft weight and stop signal, validated for order invariance and judge-swap stability; stability depends on using high-capability judges. $\MACI$ turns debate into a budget-aware, measurable, and provably terminating controller.
\end{abstract}

\section{Introduction}
\label{sec:intro}

Despite intense interest, multi-agent debate has seen limited progress in theory or practice. Unmoderated or ad hoc collaboration lacks provable termination, calibrated uncertainty, and cost control \citep{cemri2025multiagentllmsystemsfail}. Common approaches use a fixed adversarial stance~\citep{liang-etal-2024-encouraging}, aggregate without deliberation~\citep{Wang2023SelfConsistency}, or stop on heuristics, which wastes compute and locks in early errors.

\paragraph{The control problem.}
Reliable multi-agent reasoning requires \emph{active modulation}. Early rounds should explore alternatives, and later rounds should consolidate on well-supported conclusions. This demands two independent controls—what information enters (evidence gating) and how agents interact (behavioral stance)—together with principled stopping that detects convergence.

\paragraph{Beyond single-dial modulation.}
Prior work introduced contentiousness modulation~\citep{Chang2023CRIT,SocraSynthChangCSCI2023}, showing that behavior matters, but a single dial is insufficient. Systems need concurrent control over information quality and interaction intensity, and they must stop when additional debate yields diminishing returns. The $\CRIT$ framework~\citep{Chang2023CRIT} evaluates arguments but has not been coupled to systematic stopping.

\paragraph{$\MACI$: Dual dials with measured stopping.}
We present $\MACI$ (Multi-Agent Collaborative Intelligence), an active controller with two orthogonal dials. The \emph{information} dial ($\tau$) gates evidence by quality, and the \emph{behavior} dial ($\CL$) schedules contentiousness from exploration to consolidation. A moderator tracks four signals—disagreement ($\JS$), overlap ($\Ov$)\footnote{Overlap is not synonymous with agreement; it measures convergence of \emph{evidence}, whereas disagreement measures convergence of \emph{beliefs}. We require both.}, evidence quality ($\Q$), and argument quality ($\CRIT$)—and halts when gains plateau. Beyond prediction, $\MACI$ translates residual uncertainty into precision RAG: targeted acquisition plans that specify what to retrieve next.

\paragraph{Guarantees and scheduler.}
Under gated averaging with smoothing, dispersion is nonincreasing and the plateau test terminates in $O(1/\varepsilon)$ rounds, improving to $O(\log(1/\varepsilon))$ under a contraction condition (Lem.~\ref{lem:monotone}, Prop.~\ref{prop:termination}, Appx.~\ref{app:theory-plateau}). A budget-feasible UCB scheduler over $(\Delta\alpha,\Delta\gamma)$ achieves $\tilde{O}(\sqrt{KT})$ no-regret with zero expected budget violation (Prop.~\ref{prop:ucb-regret}, Appx.~\ref{app:learned-scheduler}).

\begin{table*}[t!]
\centering
\caption{Multi-agent orchestration methods: architectural comparison. 
}
\vspace{-0.1in}
\label{tab:related_comparison}
\toesize
\begin{tabular}{lccccl}
\toprule
Method & Info Gate & Behavior Ctrl & Stopping & Signals & Notes \\
\midrule
\multicolumn{6}{l}{\emph{Aggregation methods}} \\
Majority Vote~\cite{choi2025debatevote} & \xmark & \xmark & Fixed (N) & -- & Passive aggregation \\
Self-Consistency~\cite{Wang2023SelfConsistency} & \xmark & \xmark & Fixed (N) & -- & Fixed sampling \\
Mixture-of-Agents~\cite{wang2024rethinking} & \xmark & \xmark & Heuristic & -- & Layered ensemble \\
\midrule
\multicolumn{6}{l}{\emph{Fixed-stance debate}} \\
Liang et al.~\cite{liang-etal-2024-encouraging} & \xmark & Fixed & Heuristic & -- & Fixed adversarial roles \\
Diverse MAD~\cite{liu2025breaking} & \xmark & Varied & Heuristic & -- & Prompting style variation \\
Multi-LLM Debate~\cite{estornell2024multillmdebate} & \xmark & Fixed & Heuristic & -- & Ad hoc termination \\
\midrule
\multicolumn{6}{l}{\emph{Adaptive orchestration}} \\
\textbf{MACI (ours)} & \cmark & Scheduled & Plateau & 4 & Dual-dial control \\
\bottomrule
\end{tabular}
\vspace{-0.1in}
\end{table*} 

\paragraph{Empirical highlights.}
On clinical diagnosis with 1{,}500 cases spanning $100{+}$ diseases, $\MACI$ improves Acc@1 by \textbf{+3.9\,pp} over majority vote and \textbf{+3.7\,pp} over fixed-contentiousness debate, reduces calibration error (ECE \textbf{0.081} vs.\ 0.103), and uses \textbf{19\% fewer} generation tokens. A cross-domain study on news bias (619 articles) shows portability: the same controls narrow the partisan gap by \textbf{68\%} without domain tuning. $\MACI$ also translates residual uncertainty into \emph{precision RAG} plans that specify what to retrieve next to improve decision quality.

\paragraph{Contributions.}
We operationalize these ideas into five contributions:
\begin{enumerate}[leftmargin=1.2em,itemsep=-1pt,topsep=0pt]
\item \emph{Dual control.} Two-dial orchestration with information-theoretic stopping, where $D(t)$ is nonincreasing and plateau tests terminate within bounded rounds (Lem.~\ref{lem:monotone}, Prop.~\ref{prop:termination}).

\item \emph{Judge robustness.} $\CRIT$ is order invariant and stable under cross-family judge swaps (Appx.~\ref{app:crit-robustness}; $n{=}100$, 2--3\% winner flips; $\alpha{=}0.68$); we use it as a soft weight and part of a compound stop rule, not a hard oracle.

\item \emph{Targeted acquisition.} Systematic collaboration yields predictions together with precision RAG plans for underspecified problems.
\item \emph{Empirical validation.} Improvements in accuracy, calibration, and efficiency across two domains, with complete ablations and transcripts.
\end{enumerate}

\paragraph{Evaluator reliability.}
We rely on $\CRIT$, a cross-family judge that scores evidence-, logic-, and relevance-grounded argument quality. We validate order invariance and judge-swap stability ($n{=}100$; 2--3\% winner flips) and report tentatively acceptable agreement (Krippendorff's $\alpha{=}0.68$). We therefore use $\CRIT$ as a soft aggregation weight and as part of a compound stop rule, rather than as a hard oracle. Stability holds with high-capability judges; weaker or poorly aligned judges may require larger $K$, tighter thresholds, or periodic human audits (Appx.~\ref{app:crit-robustness}).
\section{Related Work}
\label{sec:related}

\paragraph{Multi-agent debate and orchestration.}
Debate-style prompting aims to surface errors through agent 
opposition~\citep{Irving2018Debate}, yet implementations often collapse to majority opinions or underperform strong single 
models~\citep{wang2024rethinking,smit2024foundation,estornell2024multillmdebate,choi2025debatevoteyieldsbetter}. 
For example, Liang et al.~\citep{liang-etal-2024-encouraging} maintain fixed adversarial intensity throughout, wasting tokens when contention becomes unproductive. Diverse Multi-Agent Debate~\citep{liu2025breaking} varies agent prompting styles but lacks behavioral adaptation within debates. Recent audits reveal that multi-agent systems frequently fail without systematic orchestration~\citep{cemri2025multiagentllmsystemsfail}, motivating principled control mechanisms. 

Table~\ref{tab:related_comparison} reveals a critical gap: existing methods provide either no control or single-axis control (fixed/varied debate intensity), but none combine independent information gating with scheduled behavioral transitions. 
\emph{Aggregation methods} sample diverse responses but aggregate without deliberation, causing performance degradation as answer spaces 
grow~\citep{choi2025debatevoteyieldsbetter}. \emph{Fixed-stance debate} 
methods maintain constant intensity, unable to transition from exploratory disagreement to consolidative synthesis. 
$\MACI$ addresses both limitations: an information dial filters evidence by quality while a behavior dial schedules contentiousness from high to low, with information-theoretic stopping that detects convergence plateaus rather than relying on heuristic round limits. \textbf{To our knowledge, no prior work provides independent controls over information quality and behavioral stance.}

\paragraph{Computational paradigms and capability trade-offs.}
Transformers~\citep{vaswani2017attention} required orders of magnitude more computation than RNNs but unlocked scaling laws~\citep{kaplan2020scaling}; large language models \citep{brown2020language} continue this trend. $\MACI$ follows this paradigm: systematic orchestration requires overhead, but provides measurable gains in accuracy, calibration, and epistemic humility. For high-stakes applications, reliability justifies cost.

\paragraph{Ensemble methods and aggregation.}
Self-Consistency~\citep{Wang2023SelfConsistency} and majority voting aggregate sampled chains with fixed behavior. Mixture-of-Agents~\citep{wang2024moa,li2025rethinkingmixtureofLLMs} improves performance via layered ensembles but under-specifies convergence beyond heuristic rounds. These methods assume weak error correlation and rarely specify stopping criteria. On short-horizon multiple-choice tasks, voting can dominate when answer spaces are small~\citep{choi2025debatevoteyieldsbetter}, but performance degrades as dimensionality grows (Appendix~\ref{app:majority}). MACI differs by using information-theoretic signals (disagreement, overlap, evidence quality) to drive active deliberation with plateau-based stopping.

\paragraph{Control and stopping in AI systems.}
Control-theoretic approaches in reinforcement learning use measured feedback to adjust policy parameters~\citep{recht2019tour}. Active learning frameworks gate information acquisition based on uncertainty~\citep{settles2009active}. Our work extends these ideas to multi-agent reasoning: the moderator tracks four signals and adjusts behavioral stance when progress plateaus, analogous to adaptive control in dynamical systems. Prior debate work lacks this closed-loop structure, relying instead on fixed interaction patterns or ad hoc stopping rules.

\paragraph{Initialization and few-shot learning.}
Bayesian \citep{xie2022explanation}, meta-learning~\citep{dai2023gpt}, and optimization-as-inference~\citep{vonoswald2023transformers} perspectives explain in-context learning. Selection and weighting strategies adapt which demonstrations matter~\citep{hong2025moicl,wang2024bycs}. We use lightweight information-theoretic metrics (pattern density, prior-target mismatch) to initialize debate parameters. Appendix~\ref{app:bridge} validates these heuristics on controlled tasks; broader ICL theory appears in concurrent work.

\paragraph{Evaluation with LLM judges.}
Independent LLM judges are appealing, but evaluators can exhibit self-preference bias, especially when sharing a family with the debaters~\citep{panickssery2024llm}. Moreover, recent scalable-oversight studies report that weak-judge regimes yield inconsistent gains~\citep{kenton2024scalable}. \emph{Compounding this,} LLMs lack robust self-validation for constraint adherence and reasoning consistency~\citep{bommasani2022foundationmodels,PathAGIChang2024}, a limitation loosely analogous to Gödel-style barriers~\citep{godel1931english} and supported empirically~\citep{hong2024verificationabilities}. Hence we use cross-family judges, mask agent identity, require span-grounded justifications, and aggregate small judge panels ($K{\ge}3$) with reliability weighting; judge-swap and order-swap checks, plus a CRIT-free controller variant, indicate that stopping and weights are not driven by a single evaluator.

\section{MACI: Information and Behavior Dials with Measurable Stopping}
\label{sec:framework}

\newenvironment{eqhalf}{
  \begingroup\small
  \setlength{\abovedisplayskip}{5.8pt}%
  \setlength{\belowdisplayskip}{1.5pt}%
  \setlength{\abovedisplayshortskip}{1.5pt}%
  \setlength{\belowdisplayshortskip}{1.5pt}%
}{\endgroup}

\paragraph{Goal.}
Coordinate multiple LLM agents to explore meaningfully and then consolidate on well-supported conclusions with calibrated confidence under a fixed token budget. All notation is defined in Appx.~\ref{app:signals}.

\paragraph{Failures we address.}
\begin{itemize}[leftmargin=1.2em,itemsep=-0.2pt,topsep=-0.2pt]
\item \textit{Prior capture:} Single models fixate on high-frequency patterns and confirm early impressions.
\item \textit{Majority illusion:} Voting appears strong in multiple choice, but does not test the depth of reasoning or the quality of evidence.
\item \textit{Unregulated debate:} Contentiousness drifts, evidence quality is not gated, and stopping is ad hoc.
\item \textit{Cost sprawl:} Long chains consume budget without proportional gains.
\end{itemize}

\paragraph{Design summary.}
$\MACI$ exposes two independent controls. The \textit{information dial} $(\tau_{\Q})$ gates which evidence agents may cite by enforcing a quality threshold. The \textit{behavior dial} ($\CL$) sets contentiousness, instructing agents how aggressively to challenge opponents. 
A moderator tracks four signals—target-aligned evidence quality ($\Q$), inter-agent disagreement ($\JS$), support overlap ($\Ov$), and argument quality ($\CRIT$)—and stops when they plateau jointly. The construction of the target prototype $\boldsymbol{\theta}$ used by $\Q$ is detailed in Appx.~\ref{app:theta}.

\subsection{The Control Loop}
\label{sec:framework:loop}

Algorithm~\ref{alg:maci} (in \text{Appx.~\ref{app:maci_algorithm}}) shows the three-phase cycle. 
Agents receive a composite anchor $A^{(t)}=\big(\psi^{(t)},\,CL^{(t)},\,\tau_{\Q}^{(t)},\,\tau_{\CRIT}^{(t)}\big)$
that contains the history of curated debates, current contentiousness instruction, and evidence quality thresholds. Each agent produces claim-justification pairs with probability distribution $p_u^{(t)}$ over the outcomes and evidence cited spans $\hat{\mathcal{E}}_u^{(t)}$. 
The moderator scores arguments with \textsc{CRIT} (Appendix~\ref{app:crit}), computes evidence quality $Q$ for the cited spans, and admits only those satisfying both gates, $Q \ge \tau_{\text{Q}}(t)$ and $\mathrm{CRIT} \ge \tau_{\text{CRIT}}(t)$ (defaults in Appx.~B.1); it then computes all signals and updates both dials.


\subsection{Four Measurement Signals}
\label{sec:framework:signals}

\paragraph{Notation.}
Let $\mathcal{Y}$ be finite outcome space, $\Delta(\mathcal{Y})$ the probability simplex. Two agents $u \in \{A,B\}$ alternate over rounds $t=1,2,\ldots$. Each outputs $p_u^{(t)} \in \Delta(\mathcal{Y})$ and cites evidence spans $\hat{\mathcal{E}}_u^{(t)}$.

\paragraph{Signal 1: Evidence quality ($\Q$).}
Measures alignment between cited evidence and target distribution using encoder embeddings:
\begin{eqhalf}
\begin{equation}
\Q^{(t)} = \cos(\bar{\mathbf{e}}^{(t)}, \boldsymbol{\theta}),
\quad \bar{\mathbf{e}}^{(t)} = \frac{1}{|\mathcal{E}^{(t)}|} \sum_{e \in \mathcal{E}^{(t)}} \frac{\phi(e)}{\|\phi(e)\|},
\label{eq:q}
\end{equation}
\end{eqhalf}

where $\mathcal{E}^{(t)}$ collects the spans from \emph{admitted} arguments in round $t$, and $\boldsymbol{\theta}$ is the target prototype.
Rising $\Q^{(t)}$ indicates convergence on task-relevant evidence.

\textit{Admission rule.} For each \emph{candidate} argument $i$ at round $t$, we compute a per-argument evidence score $q_i^{(t)}$ (analogous to \eqref{eq:q}, using only the spans cited by $i$). The moderator admits $i$ iff it passes the \textbf{dual gate}:
\[
q_i^{(t)} \,\ge\, \tau_{\Q}(t)
\quad\textbf{and}\quad
\CRIT_i^{(t)} \,\ge\, \tau_{\CRIT}(t).
\]
After admission, $\mathcal{E}^{(t)}$ aggregates all admitted spans and $\Q^{(t)}$ summarizes the evidence quality of that batch. Default settings for $\tau_{\Q}$ and $\tau_{\CRIT}$ appear in Appx.~B.1.

\paragraph{Signal 2: Disagreement ($\JS$).}
Jensen-Shannon divergence quantifies distributional disagreement:
\begin{eqhalf}
\[
\JS^{(t)} = \tfrac{1}{2} \text{KL}(p_A^{(t)} \| m^{(t)}) + \tfrac{1}{2} \text{KL}(p_B^{(t)} \| m^{(t)}),
\]
\begin{equation}
m^{(t)} = \tfrac{1}{2}(p_A^{(t)} + p_B^{(t)}).
\label{eq:jsd}
\end{equation}
\end{eqhalf}

$\JS$ is symmetric and bounded in $[0,1]$ when using log base $2$, and remains finite under disjoint support. Falling $\JS$ tracks convergence; plateaus indicate exhausted productive disagreement (thresholds are selected on the development set; Appendix~\ref{app:signals}).

\paragraph{Signal 3: Support overlap ($\Ov$).}
Jaccard similarity of cited evidence sets:
\begin{eqhalf}
\begin{equation}
\Ov^{(t)} = \frac{|\hat{\mathcal{E}}_A^{(t)} \cap \hat{\mathcal{E}}_B^{(t)}|}{|\hat{\mathcal{E}}_A^{(t)} \cup \hat{\mathcal{E}}_B^{(t)}|}.
\label{eq:overlap}
\end{equation}
\end{eqhalf}

Rising overlap indicates agents grounding arguments in shared evidence rather than talking past each other. Combined with falling $\JS$, this signals genuine convergence rather than forced agreement.

\paragraph{Signal 4: Argument quality ($\CRIT$).}
Cross-family LLM evaluator scores each argument on [0,1] for logical coherence, evidence support, and reasoning depth (Appendix~\ref{app:crit}). The moderator maintains exponential moving average of per-agent quality:
\begin{eqhalf}
\begin{equation}
\begin{aligned}
\Gamma_u^{(t)} &= \lambda \Gamma_u^{(t-1)} + (1-\lambda) \overline{\CRIT}_{u,t}, \\
\omega_u^{(t)} &= \frac{\Gamma_u^{(t)}+\epsilon}{\Gamma_A^{(t)}+\Gamma_B^{(t)}+2\epsilon},
\end{aligned}
\label{eq:reliability}
\end{equation}
\end{eqhalf}

where $\omega_u^{(t)}$ weights agent $u$'s beliefs. Rising $\CRIT$ indicates improving argument quality; agents producing consistently weak arguments receive less weight in the mixture.

\subsection{Dual-Dial Control with Plateau Detection}
\label{sec:framework:control}

\paragraph{Information gain.}
Normalized entropy reduction of reliability-weighted mixture:
\begin{eqhalf}
\begin{equation}
\hat{I}^{(t)} = \left[\frac{H(p^{(t-1)}) - H(p^{(t)})}{\log|\mathcal{Y}|}\right]_+,
\quad p^{(t)} = \sum_u \omega_u^{(t)} p_u^{(t)}.
\label{eq:ig}
\end{equation}
\end{eqhalf}

Falling $\hat{I}$ indicates lower returns from continued debate.

\paragraph{Plateau detection via relative progress ratios.}
Rather than fixed thresholds, we measure progress relative to remaining capacity:
\begin{eqhalf}
\begin{equation}
\begin{aligned}
r_I^{(t)} &= \frac{\text{MA}_w(\hat{I}^{(t)})}{\text{MA}_w(H(p^{(t-1)})/\log|\mathcal{Y}|)+\eta}, \\
r_D^{(t)} &= \frac{\text{MA}_w([\JS^{(t-1)} - \JS^{(t)}]_+)}{\text{MA}_w(\JS^{(t-1)})+\eta},
\end{aligned}
\label{eq:ratios}
\end{equation}
\end{eqhalf}

where $\text{MA}_w$ is length-$w$ moving average, $\eta=10^{-3}$ prevents division by zero. Binary flags trigger when ratios fall below thresholds for $\tau_{\text{stop}}$ consecutive rounds:
\begin{eqhalf}
\begin{equation}
\pi_I^{(t)} = \mathbb{1}[r_I^{(t)} < \varepsilon_I^{\text{rel}}],
\quad \pi_D^{(t)} = \mathbb{1}[r_D^{(t)} < \varepsilon_D^{\text{rel}}].
\label{eq:flags}
\end{equation}
\end{eqhalf}

\paragraph{Behavior dial: scheduled contentiousness.}
When plateaus detected, reduce contentiousness:
\begin{eqhalf}
\begin{equation}
\CL^{(t+1)} = [\CL^{(t)} - \alpha_I \pi_I^{(t)} - \alpha_D \pi_D^{(t)}]_+.
\label{eq:cl_schedule}
\end{equation}
\end{eqhalf}

This transitions agents from exploratory ($\CL=0.9$: ``challenge aggressively'') to consolidative ($\CL=0.5$: ``build on agreements'').

\paragraph{Information dial: raising evidence standards.}
As consensus forms, admit increasingly higher-quality arguments:
\begin{eqhalf}
\begin{equation}
\boldsymbol{\tau}^{(t+1)} = \min\!\big\{\boldsymbol{\tau}^{(t)} + \gamma\,\pi_I^{(t)},\, \tau_{\max}\mathbf{1}\big\},~
\boldsymbol{\tau}^{(t)} = \big(\tau_{\Q}^{(t)},\,\tau_{\CRIT}^{(t)}\big),
\label{eq:tau_schedule}
\end{equation}
\end{eqhalf}
\noindent where the minimum is applied element-wise, and $\mathbf{1}\in\mathbb{R}^m$ is the all-ones vector for $m$ gates (here $m{=}2$).

\paragraph{Stopping criterion.}
Halt when both plateaus persist and evidence quality is sufficient:
\begin{eqhalf}
\begin{equation}
\begin{aligned}
&\sum_{s=0}^{\tau_{\text{stop}}-1} \pi_I^{(t-s)} = \tau_{\text{stop}},~
\sum_{s=0}^{\tau_{\text{stop}}-1} \pi_D^{(t-s)} = \tau_{\text{stop}}, \\
&\Q^{(t)} \ge \tau_{\Q}^{(t)},\;\; \Ov^{(t)} \ge \tau_{\Ov}.
\end{aligned}
\label{eq:stop}
\end{equation}
\end{eqhalf}

This ensures that the gains have genuinely plateaued, not just temporarily stalled.

\paragraph{Learned scheduler (summary).}
We replace the fixed $(\alpha,\gamma)$ schedule with a tiny bandit policy over a discrete action set of $(\Delta\alpha,\Delta\gamma)$ moves, optimizing per-round information gain and dispersion drop under a token budget. Proposition~\ref{prop:ucb-regret} (Appx.~\ref{app:learned-scheduler}) shows $\tilde{O}(\sqrt{KT})$ no-regret versus the best fixed schedule with zero expected budget violation.

\paragraph{Plateau guarantees (summary).}
Under a gated averaging update with bounded noise, dispersion $D(t)$ is non-increasing and the plateau controller terminates in $O(1/\varepsilon)$ expected rounds. With a contraction condition the bound becomes $O(\log(1/\varepsilon))$. See Lemma~\ref{lem:monotone} and Proposition~\ref{prop:termination} (Appx.~\ref{app:theory-plateau}).

\emph{On dispersion.} Our guarantees analyze dispersion as the average $\mathrm{KL}$ to the running mixture, $D_{\mathrm{KL}}(t)=\frac{1}{m}\sum_i \mathrm{KL}(p_t^{(i)}\Vert \bar p_t)$, which yields monotonicity under gated averaging by convexity; in plots we additionally report $\mathrm{JSD}$, which is locally Lipschitz-equivalent on the $\delta$-smoothed simplex (Appx.~\ref{app:theory-plateau}).

\subsection{Initialization Strategy}
\label{sec:framework:init}

Before debate, we compute initial evidence quality $\Q$ and disagreement $\JS$ from the query and retrieved context. We initialize contentiousness as
\begin{eqhalf}
\[
\CL^{(1)} =
\begin{cases}
0.7 & \text{if } \Q \ge \tau_{\Q}^{\mathrm{hi}} \ \text{and}\ \JS \le \tau_{\JS}^{\mathrm{lo}},\\[2pt]
0.9 & \text{if } \Q \le \tau_{\Q}^{\mathrm{lo}} \ \text{and}\ \JS \ge \tau_{\JS}^{\mathrm{hi}},\\[2pt]
0.8 & \text{otherwise.}
\end{cases}
\]
\end{eqhalf}

The thresholds $\tau_{\Q}^{\mathrm{lo}},\tau_{\Q}^{\mathrm{hi}},\tau_{\JS}^{\mathrm{lo}},\tau_{\JS}^{\mathrm{hi}}$ are set by percentiles on the development set (Appendix~\ref{app:bridge}). This reduces wasted exploration on straightforward cases while preserving search capacity on ambiguous ones.

\subsection{Complete Algorithm}
\label{sec:framework:algorithm}

Algorithm~\ref{alg:maci} integrates all components (see Appx.~\ref{app:maci_algorithm} for complete pseudocode). 

\subsection{Problem regime classification}
$\MACI$ targets open-ended reasoning where (1) the answer space is large ($|\mathcal{Y}| \ge 20$), (2) evidence synthesis is required rather than pattern matching, and (3) uncertainty quantification matters. For small multiple-choice tasks ($|\mathcal{Y}| \le 5$), majority voting typically suffices. Appendix~\ref{app:regime} formalizes this distinction.

\section{Empirical Validation}
\label{sec:experiments}

We evaluate $\MACI$ in two domains, clinical diagnosis (G1) and news bias detection (G2), measuring accuracy, calibration, and cost, with ablations and case studies.

\subsection{G1: Clinical Diagnosis}
\label{sec:clinical}

\paragraph{Dataset and protocol.}
\textbf{Kaggle Symptom--Disease}~\citep{KaggleDS2023}: 1{,}500 cases across 30 diseases. The corpus provides only current symptoms (no labs or history). We use the \emph{unconstrained} setting: models output free-text diagnoses from $100{+}$ possibilities (no candidate list), matching clinical triage with broad differentials.

\paragraph{Why voting fails here.}
When predictions are constrained to 30 labels, accuracy reaches 95--97\%. In the unconstrained regime, agreement collapses and Acc@1 falls to 73--78\% because many cases are under-specified and require evidence synthesis rather than pattern matching. Majority voting is unreliable in large output spaces (Appx.~\ref{app:majority}).

\paragraph{Setup.}
\textbf{Backbones:} GPT-4o, Claude~3 Opus, Gemini~1.5 Pro; all pairs ($M{=}2$). \textbf{Retrieval:} BM25 + dense reranking ($k_{\text{ret}}{=}32$). \textbf{Baselines:} best single; majority vote (20); self-consistency (20); fixed-contentiousness debate~\citep{liang-etal-2024-encouraging}. \textbf{Statistics:} BCa 95\% CIs from $B{=}20$ bootstraps; paired Wilcoxon with Benjamini--Hochberg FDR ($q{=}0.05$).

\begin{table}[t]
\caption{\small Pairwise $\MACI$ vs.\ single models on 1{,}500 cases. Stars: paired Wilcoxon vs.\ best single (BH FDR $q{=}0.05$): $^{***}p{<}0.001$, $^{**}p{<}0.01$.}
\label{tab:clinical_pairs}
\centering
\footnotesize
\setlength{\tabcolsep}{5pt}
\begin{tabular}{lccc}
\toprule
\textbf{Model} & \textbf{Acc@1} & \textbf{Acc@3} & \textbf{MRR} \\
\midrule
GPT-4o             & $0.734$ & $0.847$ & $0.781$ \\
Claude~3 Opus      & $0.720$ & $0.832$ & $0.768$ \\
Gemini~1.5 Pro     & $0.693$ & $0.825$ & $0.756$ \\
\midrule
GPT+Claude         & $\mathbf{0.786}^{***}$ & $\mathbf{0.874}^{***}$ & $\mathbf{0.823}^{***}$ \\
GPT+Gemini         & $0.751^{**}$          & $0.861^{**}$           & $0.807^{**}$ \\
Claude+Gemini      & $0.745^{***}$         & $0.858^{***}$          & $0.798^{***}$ \\
\bottomrule
\end{tabular}
\end{table}

\subsubsection{Main Results, Three Parts}
\paragraph{Results \#1: $\MACI$ vs.\ single models.}
Every $\MACI$ pair outperforms its best single (Table~\ref{tab:clinical_pairs}); for example, GPT\textendash4o+Claude improves Acc@1 by +5.2\,pp.

\paragraph{Results \#2: $\MACI$ ablation.}
$\MACI$ relies on three active controls: (i) behavioral scheduling (high $\CL$ for exploration, low $\CL$ for consolidation), 
(ii) \emph{dual gating} ($Q \ge \tau_{\text{Q}}$ \textbf{and} $\mathrm{CRIT} \ge \tau_{\text{CRIT}}$),
and (iii) information\textendash theoretic stopping (plateaus in $\JS$ and IG). Fixed\textendash contentiousness debate wastes tokens; passive aggregation uses 5--7$\times$ more tokens and underperforms. Table~\ref{tab:maci_ablations} shows orthogonal effects: removing the $\Q$ gate costs $-5.2$\,pp Acc@1 and raises ECE by $+0.040$; uniform weights (no reliability) cost $-3.0$\,pp and $+0.013$ ECE; disabling scheduling (holding $\CL$ high/low) costs $-3.9$ to $-6.0$\,pp with larger ECE.

\begin{table}[H]
\caption{\small Ablations on clinical diagnosis. $\Delta$ is the drop relative to full $\MACI$ (lower is better). (Uniform weights = no reliability weighting)}
\centering
\footnotesize
\setlength{\tabcolsep}{4pt}
\begin{tabular}{lcccc}
\toprule
Variant & Acc@1 & $\Delta$ & ECE & $\Delta$ \\
\midrule
$\MACI$ (full)        & 77.8 & --  & 0.081 & -- \\
No schedule (high)    & 73.9 & 3.9 & 0.112 & 0.031 \\
No schedule (low)     & 71.8 & 6.0 & 0.099 & 0.018 \\
No $Q$ gate           & 72.6 & 5.2 & 0.121 & 0.040 \\
Uniform weights       & 74.8 & 3.0 & 0.094 & 0.013 \\
\bottomrule
\end{tabular}
\label{tab:maci_ablations}
\end{table}

\begin{table*}[t]
\caption{\small Main results (1{,}500 cases). $\MACI$ improves accuracy and calibration with fewer tokens and rounds.}
\label{tab:maci_main_results}
\centering
\footnotesize
\setlength{\tabcolsep}{5pt}
\begin{tabular}{l cccccc}
\toprule
Method & Acc@1 & MRR & ECE & Brier & Tokens ($10^3$) & Rounds \\
\midrule
Single best           & 72.1 & 0.802 & 0.126 & 0.187 & 6.8   & 1.0 \\
Majority vote (20)    & 73.9 & 0.817 & 0.139 & 0.196 & 92.3  & 1.0 \\
Self-consistency (20) & 74.7 & 0.827 & 0.147 & 0.201 & 128.6 & 1.0 \\
Liang et al.~\citep{liang-etal-2024-encouraging} & 74.1 & 0.834 & 0.103 & 0.173 & 22.6  & 3.3 \\
\textbf{$\MACI$ (ours)}  & \textbf{77.8} & \textbf{0.856} & \textbf{0.081} & \textbf{0.158} & \textbf{18.4} & \textbf{2.7} \\
\bottomrule
\end{tabular}
\end{table*}

\begin{figure*}[t!]
\centering
\begin{subfigure}[t]{0.33\linewidth}
  \centering
  \includegraphics[width=\linewidth,height=4.1cm]{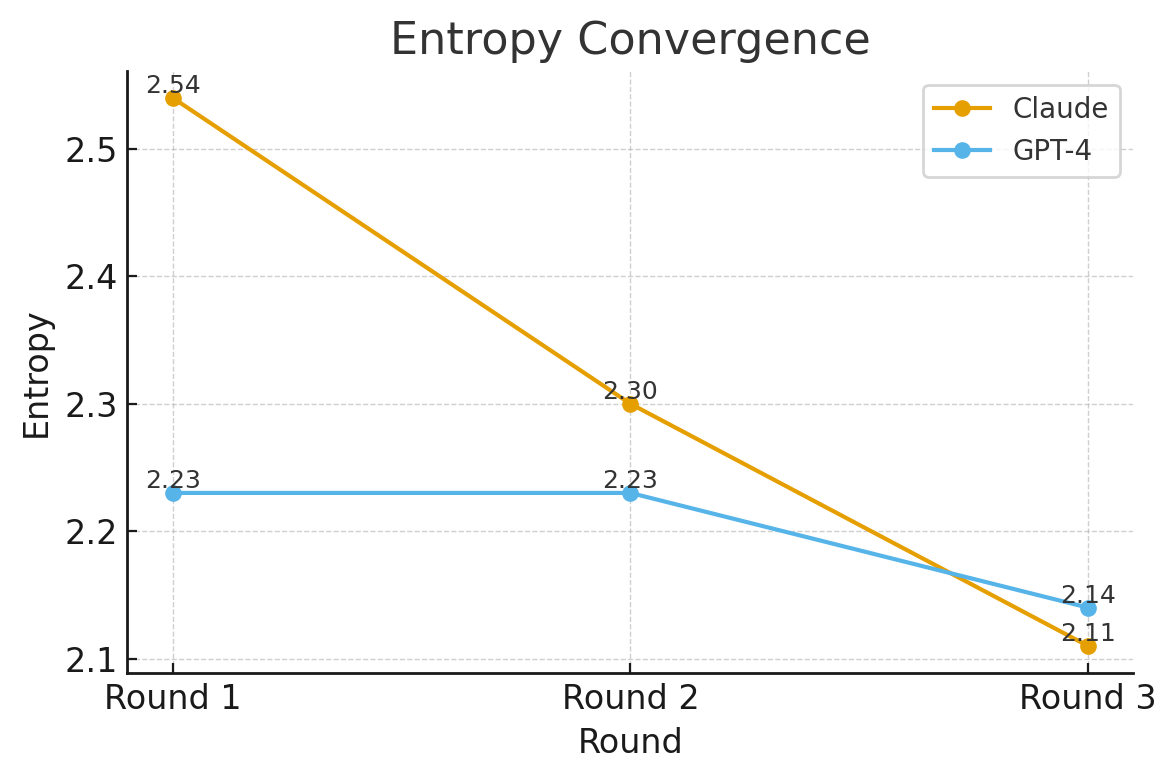}
  \caption{\footnotesize Entropy convergence.}
\end{subfigure}\hfill
\begin{subfigure}[t]{0.34\linewidth}
  \centering
  \includegraphics[width=\linewidth,height=3.9cm]{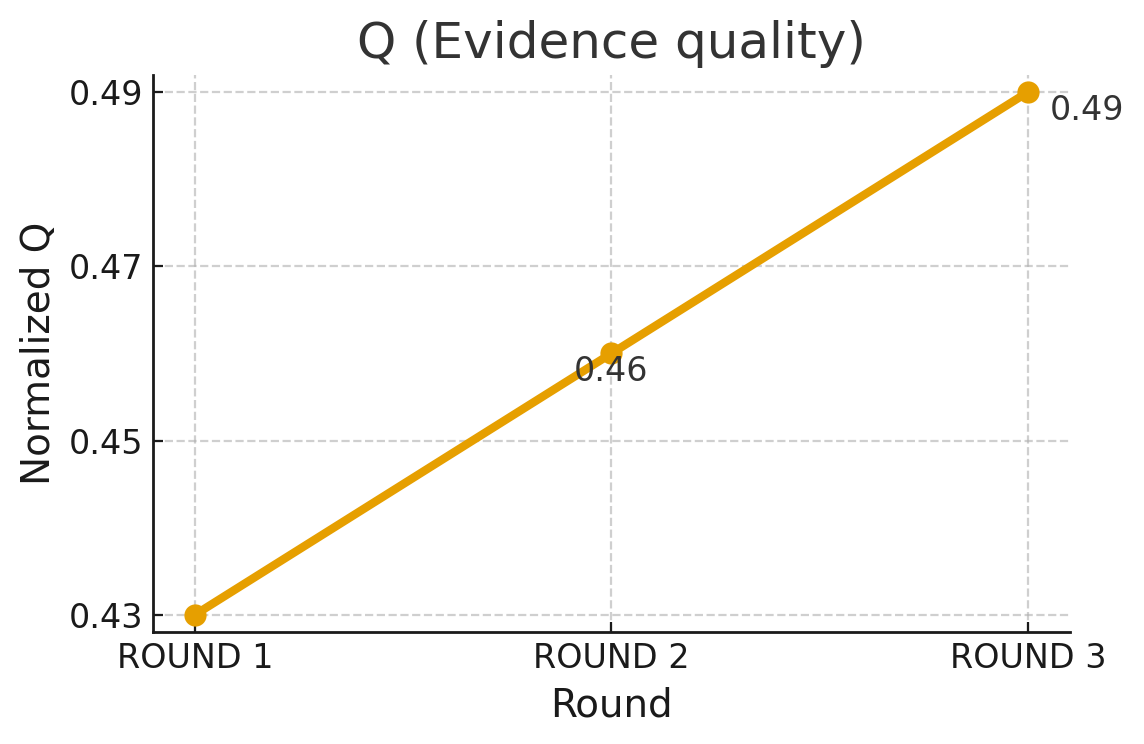}
  \caption{\footnotesize Evidence quality ($\Q$) improves.}
\end{subfigure}\hfill
\begin{subfigure}[t]{0.33\linewidth}
  \centering
  \includegraphics[width=\linewidth,height=3.6cm]{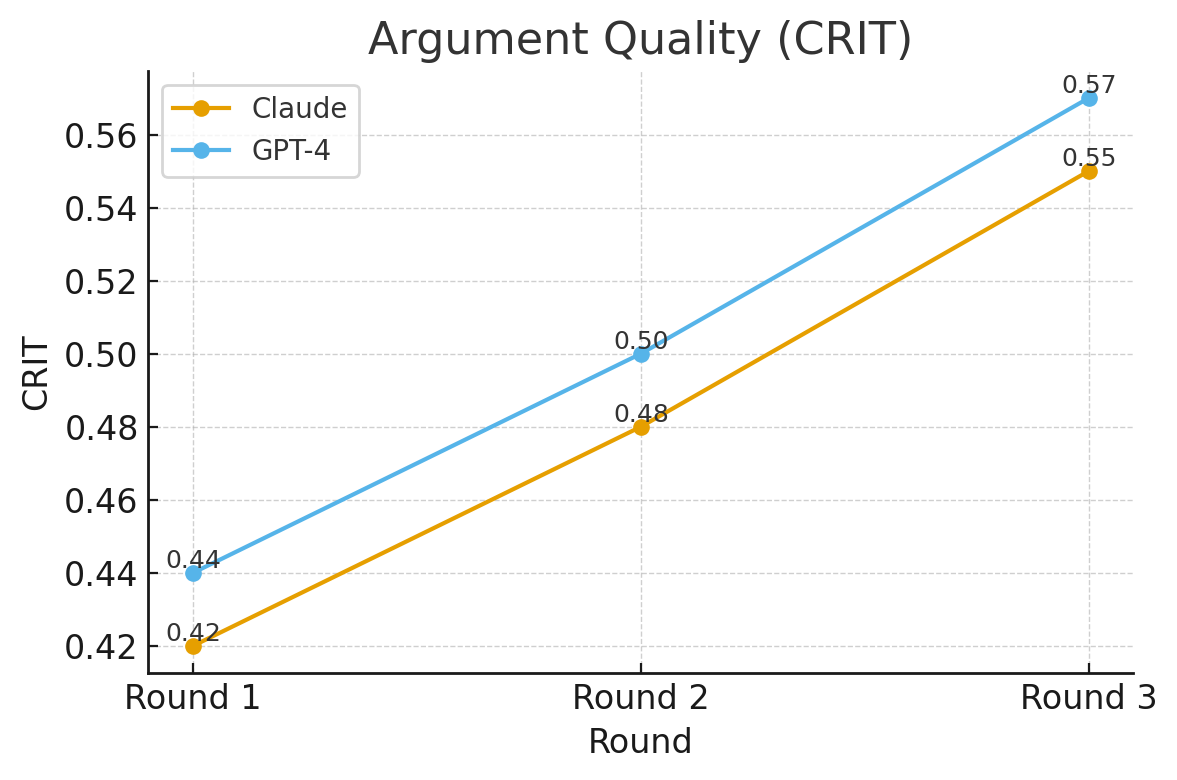}
  \caption{\footnotesize Argument quality ($\CRIT$) improves.}
\end{subfigure}
\caption{\small Internal signals over debate rounds (clinical). \textbf{Left:} entropy declines under scheduled $\CL$. 
\textbf{Middle:} evidence quality $\Q$ rises as the evidence gate $\tau_{\Q}$ tightens.
\textbf{Right:} argument quality $\CRIT$ rises as low-quality arguments are filtered. Termination coincides with plateaued IG and low dispersion (not shown).}
\label{fig:entropy_q_crit}
\end{figure*}

\paragraph{Results \#3: $\MACI$ vs.\ competing schemes.}
Against debate/aggregation baselines, $\MACI$ is more accurate and better calibrated with fewer tokens (Table~\ref{tab:maci_main_results}): +3.9\,pp over majority vote and +3.7\,pp over Liang et al.\ (both $p{<}0.001$), ECE $0.081$ vs.\ $0.103$, and $19\%$ fewer generation tokens. These gains align with the ablation evidence.

\paragraph{Computational overhead.}
$\MACI$ uses 18.4k tokens vs.\ 6.8k for a single model (2.7$\times$ overhead). For high\textendash stakes settings where accuracy reduces costly errors, this overhead is justified; for routine tasks, single\textendash model inference may suffice.

\paragraph{Evidence quality stratification.}
Accuracy correlates with initial $\Q$: cases with $\Q{>}0.7$ reach 78.3\% vs.\ 69.8\% for $\Q{<}0.3$ (Table 20, Appx.~K). This validates $\Q$ as a useful signal and suggests retrieval improvements may benefit $\MACI$ more than parameter tuning.

\paragraph{Controller guarantee (theory\textendash lite).}
With bounded noise and gated averaging, dispersion $D(t)$ is non\textendash increasing, and the plateau controller terminates in $O(1/\varepsilon)$ expected rounds; under a contraction, $D(t)$ decays geometrically and termination is $O(\log(1/\varepsilon))$ (Lem.~\ref{lem:monotone}, Prop.~\ref{prop:termination}, Cor.~\ref{cor:geometric}, Appx.~\ref{app:theory-plateau}).

\subsubsection{Two Case Studies: Active Control and Information Acquisition}

We illustrate $\MACI$’s scheduling and gating, and show termination aligns with plateaued IG and low dispersion. Each case ends with a targeted information\textendash acquisition plan specifying \emph{what to fetch next} for retrieval\textendash augmented generation. Full transcripts appear in Appx.~\ref{app:maci_transcript} and \ref{app:maci_hepatitis}.

\paragraph{Case 1: Dengue vs.\ Chikungunya (3 rounds).}
\textbf{Symptoms:} skin rash, joint pain, vomiting, fatigue, high fever, headache, retro\textendash orbital pain, myalgia, red spots. \textbf{Ground truth:} Dengue Fever.

Table~\ref{tab:dengue_signals} shows scheduling $\CL \in \{0.9,0.7,0.5\}$. \emph{R1} ($\CL{=}0.9$): GPT\textendash4o proposes specific viral diagnoses (Dengue 60\%, Chikungunya 25\%, Zika 15\%); Gemini stays categorical (Viral Infection 60\%, Autoimmune 20\%, Bacterial 15\%). High disagreement ($\JS{=}0.56$). \emph{R2} ($\CL{=}0.7$): targeted challenges shift Gemini to Dengue (50\%) and Chikungunya (30\%); $\JS{=}0.37$, $\CRIT$ 0.75\,$\rightarrow$\,0.82. \emph{R3} ($\CL{=}0.5$): distributions align (Dengue 60\%, Chikungunya 35\%, Zika 5\%); $\JS{=}0.02$, MI plateaus (0.49); trigger stop flags.

\begin{table}[t]
\centering
\caption{\small Signal evolution on a Dengue case. $\widetilde{W}_1$: Wasserstein-1 distance; $\mathrm{MI}$: mutual information; $\JS$: Jensen--Shannon divergence. Lower is better for $\widetilde{W}_1$ and $\JS$; higher is better for $\mathrm{MI}$. (Appx~\ref{app:signals})}
\label{tab:dengue_signals}
\footnotesize
\begin{tabular}{lcccc}
\toprule
\textbf{Round} & $\boldsymbol{\CL}$ & $\boldsymbol{\widetilde{W}}_{\boldsymbol{1}}$ & $\boldsymbol{\mathrm{MI}}$ & $\boldsymbol{\JS}$ \\
\midrule
1 & 0.9 & 0.68 & 0.43 & 0.56 \\
2 & 0.7 & 0.44 & 0.46 & 0.37 \\
3 & 0.5 & 0.36 & 0.49 & 0.02 \\
\midrule
\multicolumn{2}{l}{\textit{Change (R1 $\to$ R3)}} & $-47\%$ & $+14\%$ & $-96\%$ \\
\bottomrule
\end{tabular}
\end{table}

\paragraph{Figure 1 (internal signals).}
Fig.~\ref{fig:entropy_q_crit} shows entropy declining as $\CL$ moves from explore to consolidate; $\Q$ rising as the gate $\tau$ tightens; and $\CRIT$ increasing as low\textendash quality arguments are filtered. Dispersion $\JS$ (not shown) falls in parallel; termination matches IG and dispersion plateaus.

\paragraph{Evaluator robustness.}
Reversing debater order leaves the $\CRIT$ winner unchanged; cross\textendash family judge swaps preserve item\textendash level rankings (Appx.~\ref{app:crit-robustness}).

\paragraph{Informative RAG recommendations.}
Beyond “Dengue Fever (60\%),” $\MACI$ outputs a plan: \emph{Questions} (endemic travel; bleeding/bruising), \emph{Labs} (CBC for thrombocytopenia; IgM/IgG; PCR; NS1 antigen), plus retrieval cues for RAG.

\paragraph{Case 2: Hepatitis differential (4 rounds, adaptive stopping).}
\textbf{Symptoms:} itching, vomiting, fatigue, weight loss, high fever, jaundice, dark urine, abdominal pain. \textbf{Ground truth:} Hepatitis~C.

\begin{figure*}[t]
\centering
\includegraphics[width=.47\textwidth,height=3.9cm]{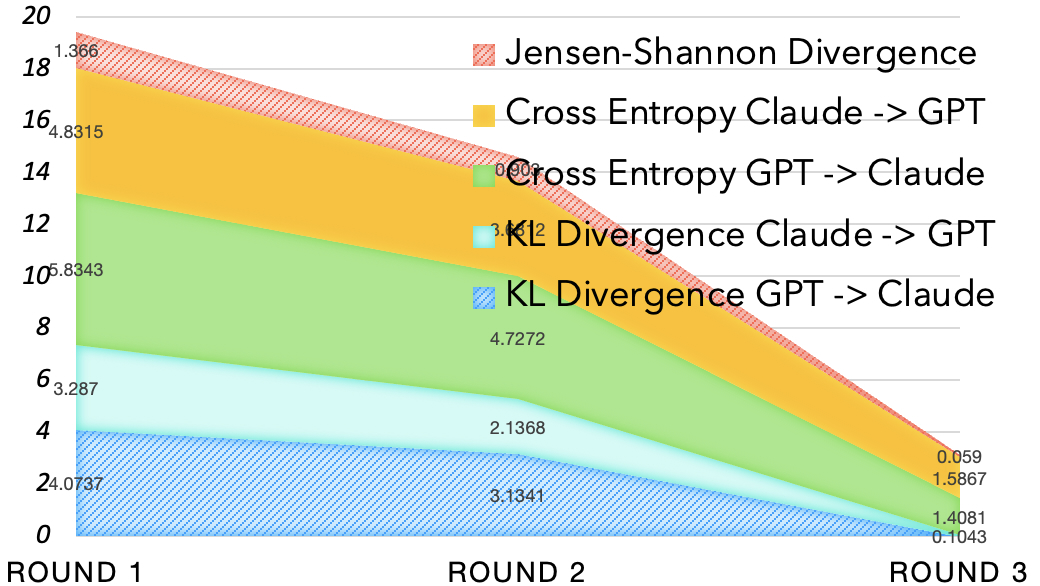}\hspace{-.01in}
\includegraphics[width=.47\textwidth,height=4.1cm]{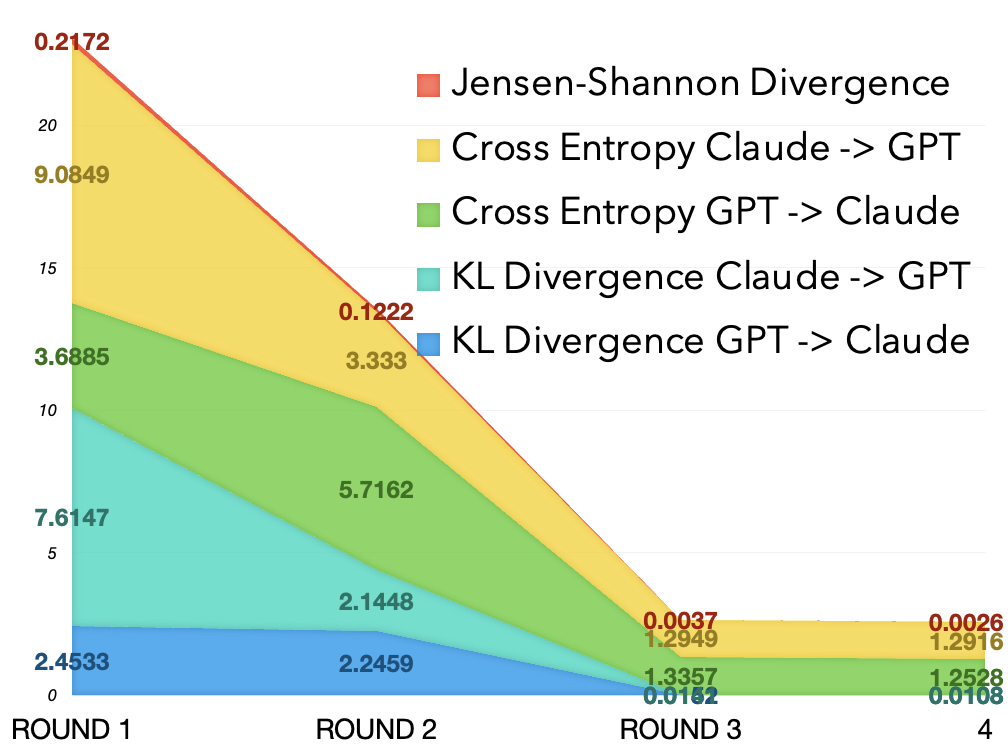}
\caption{Convergence in two cases: $D_{\mathrm{JS}}$ and allied distances decrease monotonically under scheduling.}
\label{fig:case_convergence}
\end{figure*}

Table~\ref{tab:hepatitis_signals} shows $\CL \in \{0.9,0.7,0.5,0.3\}$. \emph{R1} ($\CL{=}0.9$): GPT\textendash4o emphasizes HCV (40\%); Claude emphasizes HBV (35\%) and alcoholic hepatitis (15\%); $\JS{=}0.22$. \emph{R2} ($\CL{=}0.7$): reasoning integration; $\JS{=}0.12$, $\CRIT$ rises. \emph{R3} ($\CL{=}0.5$): near convergence (HCV 35\%, HBV 25–30\%, cirrhosis 20\%); $\JS{=}0.04$, MI 0.49. \emph{R4} ($\CL{=}0.3$): distributions nearly identical; termination.

\begin{table}[H]
\centering
\caption{\small Signal evolution: Hepatitis case (4 rounds).}
\label{tab:hepatitis_signals}
\footnotesize
\begin{tabular}{cccccc}
\toprule
Round & $\CL$ & $\widetilde{W}_1$ & MI & $\CRIT$ & $\JS$ \\
\midrule
1 & 0.9 & 0.52 & 0.39 & 0.76 & 0.22 \\
2 & 0.7 & 0.45 & 0.41 & 0.83 & 0.12 \\
3 & 0.5 & 0.33 & 0.49 & 0.89 & 0.04 \\
4 & 0.3 & 0.31 & 0.49 & 0.92 & 0.03 \\
\midrule
\multicolumn{2}{l}{Change} & $-40\%$ & $+26\%$ & $+21\%$ & $-86\%$ \\
\bottomrule
\end{tabular}
\end{table}

\paragraph{Internal signals (case view).}
As in Case 1, entropy declines as $\CL$ schedules down; $\Q$ and $\CRIT$ rise with tighter gates; $\JS$ falls toward zero. Fig.~\ref{fig:case_convergence} visualizes monotonic dispersion declines for both cases.

\paragraph{Evaluator robustness.}
The same checks apply: order swap and cross\textendash family judge swap leave outcomes stable (Appx.~\ref{app:crit-robustness}).

\paragraph{Informative RAG recommendations.}
$\MACI$ outputs: \emph{Labs} (HAV/HBV/HCV serology; LFTs ALT/AST/bilirubin/alkaline phosphatase; CBC; INR), \emph{Imaging} (abdominal ultrasound), \emph{History} (alcohol/medications, injection drug use, transfusions), \emph{Monitoring} (encephalopathy, coagulopathy).


\subsection{G2: News Bias Detection (Cross\textendash Domain)}
\label{sec:news_bias}

\paragraph{Why this tests portability.}
News bias differs from G1 on \emph{evidence} (full articles vs.\ symptom lists), \emph{labels} (5\textendash point continuous vs.\ discrete), and \emph{ground truth} (partisan\textendash anchored vs.\ objective). A self\textendash supervised study maps $\CL$ to linguistic patterns across topics (Table~\ref{tab:cont-linguistics}, Appx.~\ref{app:contentiousness}): high $\CL{=}0.9$ is confrontational, low $\CL{=}0.3$ is cooperative. A shallow classifier predicts $\CL$ bins with AUC $0.83$ (Table~\ref{tab:cl-aucs}), indicating measurable behavior modulation beyond domain content.

\paragraph{Dataset and setup.}
We use 619 scandal articles from \citep{Budak2016} with Democrat/Republican five\textendash point ratings ($-2$ to $+2$). Hyperparameters match G1: $\CL \in \{0.9,0.7,0.5\}$, gates ($\tau_{\Q}$, $\tau_{\Ov}$), and the same stopping rule. Two agents (GPT\textendash4o, Claude~3 Opus) start with asymmetric partisan framings.

\begin{table}[th]
\caption{\small News bias results (619 articles). \emph{Tokens are reported in thousands} ($\times 10^3$). MAD: mean absolute deviation from centrist (0).}
\centering
\footnotesize
\setlength{\tabcolsep}{4pt}
\begin{tabular}{lccccc}
\toprule
Method & MAD & ECE & Brier & Tokens & Rounds \\
\midrule
D\textendash annotator      & 0.84 & 0.168 & 0.221 & --   & -- \\
R\textendash annotator      & 0.76 & 0.173 & 0.218 & --   & -- \\
Average (D+R) & 0.51 & 0.142 & 0.189 & --   & -- \\
Unscheduled    & 0.38 & 0.127 & 0.178 & 11.7 & 3.8 \\
\textbf{$\MACI$} & \textbf{0.12} & \textbf{0.093} & \textbf{0.167} & \textbf{8.2} & \textbf{2.4} \\
\bottomrule
\end{tabular}
\label{tab:news_main_results}
\end{table}

\paragraph{Results \#1: Narrowing partisan gaps.}
Partisan annotators diverge (MAD 0.84 vs.\ 0.76). Averaging yields 0.51; unscheduled debate 0.38. $\MACI$ achieves \textbf{0.12} MAD (\textbf{76\%} vs.\ averaging, \textbf{68\%} vs.\ unscheduled), with better calibration (ECE 0.093 vs.\ 0.127–0.173) and \textbf{30\%} fewer tokens (8.2 vs.\ 11.7). Fig.~\ref{fig:news_main}a shows centering between partisan peaks.

\paragraph{Results \#2: Which controls matter (ablations).}
Removing scheduling raises MAD by {+}0.14–0.19; removing the $\Q$ gate by {+}0.17; uniform (no\textendash reliability) weights by {+}0.06 and ECE by {+}0.014 (Table~\ref{tab:news_ablations}).

\begin{table}[th]
\caption{\small Ablations. Positive $\Delta$ MAD indicates worse bias.}
\centering
\footnotesize
\setlength{\tabcolsep}{3.5pt}
\begin{tabular}{lcccc}
\toprule
Variant & MAD & $\Delta$ & ECE & $\Delta$ \\
\midrule
$\MACI$ (full)       & 0.12 & --   & 0.093 & -- \\
No schedule (high)   & 0.31 & 0.19 & 0.134 & 0.041 \\
No schedule (low)    & 0.26 & 0.14 & 0.118 & 0.025 \\
No $\Q$ gate         & 0.29 & 0.17 & 0.141 & 0.048 \\
Uniform weights      & 0.18 & 0.06 & 0.107 & 0.014 \\
\bottomrule
\end{tabular}
\label{tab:news_ablations}
\end{table}

\paragraph{Results \#3: Signal convergence and behavioral transfer.}
Debates average $2.4$ rounds and stop when dispersion and information gains plateau. Convergence traces mirror G1; see Appx.~\ref{app:news_bias} for signal plots.

\begin{figure}[t]
\centering
\includegraphics[width=.95\linewidth,height=5.3cm]{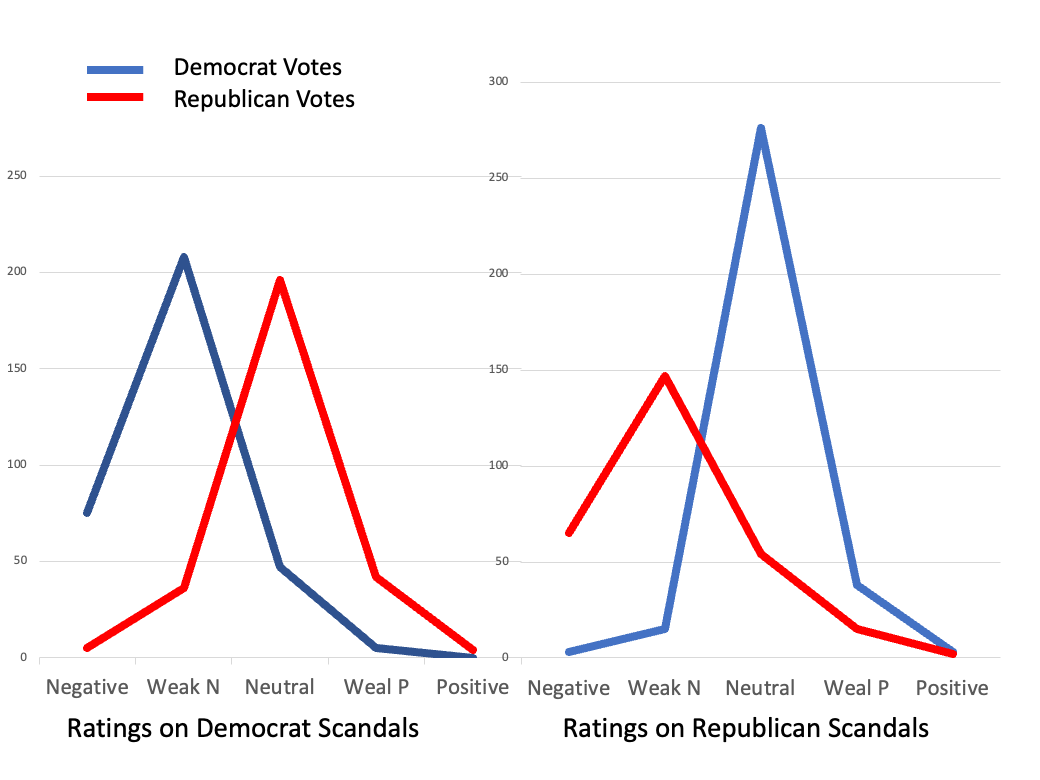}
\caption{\small Convergence signals during bias mitigation (news bias). Wasserstein distance falls, normalized MI rises and then plateaus, and cross-entropy declines. Debates stop when dispersion and information gains plateau, mirroring the clinical setting (Appx.~\ref{app:news_bias}).}
\label{fig:news_main}
\end{figure}

\paragraph{Example (one case).}
On “Christopher Columbus,” one agent begins Euro\textendash centric (expansion and trade) and the other foregrounds indigenous harms. Over four rounds the former acknowledges severe consequences within broader colonial policy and recommends balanced presentation with citations (Table~\ref{tab:debate_arguments}, Appx.~\ref{app:news_bias}).

\paragraph{Portability.}
Same code and hyperparameters, no domain tuning. Controls ($\CRIT$ and $\Q$ gates, $\CL$ scheduling, $\JS$ and $\Ov$ stopping) transfer across evidence types and label structures. Per\textendash article distributions appear in Appx.~\ref{app:news_bias}.

\paragraph{Budget-aware learned scheduler.}
We also provide a drop\textendash in scheduler that replaces the fixed $(\alpha,\gamma)$ schedule with a small bandit over $(\Delta\alpha,\Delta\gamma)$ moves, optimizing per\textendash round IG gains and dispersion drops under a token budget. The policy enjoys $\tilde{O}(\sqrt{KT})$ no\textendash regret against the best fixed schedule with zero expected budget violation (Prop.~\ref{prop:ucb-regret}, Appx.~\ref{app:learned-scheduler}). We keep the fixed schedule in G1/G2 main tables for comparability.


\section{Conclusion}
\label{sec:conclusion}

\paragraph{Advancing multi-agent orchestration.}
We presented $\MACI$, a control framework that extends contentiousness modulation~\cite{Chang2023CRIT,SocraSynthChangCSCI2023} with (1) \emph{dual-dial} control separating information admission from behavioral stance, (2) \emph{information-theoretic} stopping via relative progress ratios, and (3) \emph{adaptive} initialization to reduce wasted exploration. Unlike passive aggregation or fixed debate, $\MACI$ steers deliberation using measurable signals—disagreement, overlap, evidence quality, and argument quality—and halts when gains plateau.

On clinical diagnosis (1{,}500 cases), $\MACI$ improves accuracy (+3.9\,pp over majority vote; +3.7\,pp over fixed-contentiousness) and calibration (ECE $0.081$ vs.\ $0.103$) while using $19\%$ fewer generation tokens; case studies show convergence (disagreement $\downarrow$86--96\%, $\CRIT$ $\uparrow$21--26\%). $\CRIT$ is order-invariant and stable under cross-family judge swaps (2--3\% winner flips; Appx.~\ref{app:crit-robustness}). Theory-lite guarantees bound dispersion and ensure termination in $O(1/\varepsilon)$ rounds (improving to $O(\log(1/\varepsilon))$), and a budget-feasible UCB scheduler attains $\tilde{O}(\sqrt{KT})$ no-regret with zero expected budget violation (Appx.~\ref{app:theory-plateau},~\ref{app:learned-scheduler}). News-bias validation (619 articles) shows portability: the same controls reduce the partisan gap by $68\%$ without domain tuning.

\paragraph{Limitations and directions.}
\begin{itemize}[leftmargin=1.0em,itemsep=-1pt,topsep=0pt]
    \item \textbf{Scope.} We evaluate two domains with subjective judgment; broader testing on tasks with objective ground truth (math, code, factual QA) and longer chains remains future work. Retrieval quality mediates performance (Sec.~\ref{sec:clinical}), suggesting gains from stronger evidence acquisition.
    \item \textbf{LLM-judge dependence.} $\CRIT$ uses cross-family evaluators and may inherit bias. We mitigate via identity masking, span-grounded justifications, and judge-swap checks (2--3\% flips; Appx.~\ref{app:crit-robustness}). Lower-capability regimes may require larger $K$, tighter stop tolerances, and periodic human audits.
    \item \textbf{Cost trade-offs.} $\MACI$ uses $2.7{\times}$ tokens vs.\ single-model inference; while accuracy gains justify overhead for complex cases, domain-specific cost–benefit studies are needed.
    \item \textbf{Hyperparameters.} $\CL$ schedules, gates, and plateau tolerances are tuned on dev data; generalization beyond diagnosis/news is open. Scaling to 3--5 agents and meta-learned initialization are natural next steps.
\end{itemize}
\emph{Closed-loop RAG.} A promising direction is coupling $\MACI$’s acquisition plans to budgeted RAG with end-to-end evaluation; transcripts (Appx.~\ref{app:maci_transcript}, \ref{app:maci_hepatitis}) enable auditability.

Overall, $\MACI$ reframes multi-agent debate as a controllable, auditable, and \emph{provably terminating} controller, turning orchestration from heuristics into a budget-aware, measurable process.

\bibliography{EdwardChang,Evince,UCCT,MACI,References-1,References-2,SocraHealth,RobustR,TSP,TemporalPlanning,QAExperiments}
\bibliographystyle{abbrvnat} 

\clearpage
\appendix
\section*{Appendix Overview}

\begin{itemize}[leftmargin=1.2em,itemsep=0pt]
\item \textbf{Appendix A}: Notation and symbols
\item \textbf{Appendix B}: Configuration, hyperparameters, experimental setup
\item \textbf{Appendix C}: $\MACI$ algorithm specification 
\item \textbf{Appendix D}: Information-theoretic signals 
\item \textbf{Appendix E}: CRIT evaluator details
\item \textbf{Appendix F}: Adaptive initialization strategy
\item \textbf{Appendix G}: Dengue case study (full transcript)
\item \textbf{Appendix H}: Hepatitis case study (full transcript)
\item \textbf{Appendix I}: Linguistic analysis and behavioral validation
\item \textbf{Appendix J}: News bias detection 
\item \textbf{Appendix K}: Majority vote failure analysis
\item \textbf{Appendix L}: Learned scheduler for $(\alpha,\gamma)$ (algorithm and no-regret analysis)
\item \textbf{Appendix M}: Plateau guarantees (monotone dispersion and termination)
\end{itemize}

\thispagestyle{empty}

\section{Notation and Symbols}
\label{app:notation}

\begin{table}[H]
\caption{Key symbols used throughout.}
\centering
\footnotesize
\setlength{\tabcolsep}{4pt}
\renewcommand{\arraystretch}{1.05}
\begin{tabular}{p{0.15\linewidth} p{0.77\linewidth}}
\toprule
\textbf{Symbol} & \textbf{Meaning} \\
\midrule
$\MACI$ & Multi-Agent Collaborative Intelligence (dual-dial control framework) \\
$\Q$ & Target-aligned evidence quality \\
$\JS$ & Inter-agent disagreement (Jensen-Shannon divergence) \\
$\Ov$ & Support overlap (Jaccard over cited spans) \\
$\hat{I}$ & Normalized information gain (entropy drop) \\
$\CRIT$ & Argument-quality score from independent LLMs \\
$\CL$ & Behavior dial (contentiousness level) \\
$\tau_{\Q}$ & Evidence-quality gate (information dial)\\
$\tau_{\CRIT}$ & Argument-quality gate (evaluator dial)\\
$\tau_{\Ov}$ & Overlap floor used only in the stopping rule \\
$\omega_u^{(t)}$ & Reliability weights (EMA of $\CRIT$), agent $u$ \\
$p_u^{(t)}$ & Agent $u$'s probability distribution at round $t$ \\
$p^{(t)}$ & Reliability-weighted mixture distribution \\
$\mathcal{E}_u^{(t)}$ & Evidence spans cited by agent $u$ at round $t$ \\
$r_I^{(t)}, r_D^{(t)}$ & Relative progress ratios (information gain, disagreement) \\
$\pi_I^{(t)}, \pi_D^{(t)}$ & Plateau detection flags \\
$\widetilde{W}_1$ & Normalized Wasserstein-1 distance \\
\bottomrule
\end{tabular}
\end{table}
\section{Configuration and Hyperparameters}
\label{app:config}

\subsection{Default Hyperparameters}

Unless specified otherwise, $\MACI$ uses these settings across all experiments:

\paragraph{Control parameters.}
\begin{itemize}[leftmargin=1.2em,itemsep=2pt]
  \item \textbf{Reliability smoothing (EMA):} $\lambda = 0.8$, $\epsilon = 10^{-6}$.
  \item \textbf{Initial gates:} $\tau_{\Q}^{(1)} = 0.50$, $\tau_{\CRIT}^{(1)} = 0.30$.
  \item \textbf{Gate tightening:} $(\tau_{\Q}^{(t+1)},\tau_{\CRIT}^{(t+1)}) \leftarrow 
        \min\!\big\{(\tau_{\Q}^{(t)},\tau_{\CRIT}^{(t)}) + \gamma\,\pi_I^{(t)},\,(\tau_{\max},\tau_{\max})\big\}$.
  \item \textbf{Overlap floor for stopping (not an admission gate):} $\tau_{\Ov}=0.30$.
\end{itemize}

\paragraph{Initialization bins (development set).}
\begin{itemize}[leftmargin=1.2em,itemsep=1pt]
  \item High quality ($\Q \ge 0.7$) and low disagreement ($\JS \le 0.3$): $\CL^{(1)}=0.7$.
  \item Low quality ($\Q \le 0.5$) and high disagreement ($\JS \ge 0.5$): $\CL^{(1)}=0.9$.
  \item Otherwise: $\CL^{(1)}=0.8$.
\end{itemize}

\paragraph{Contentiousness schedule.}
Standard progression: $\CL \in \{0.9, 0.7, 0.5, 0.3\}$ with adaptive transitions triggered by plateau flags.

\subsection{Clinical Diagnosis (G1) Setup}

\paragraph{Dataset.}
Kaggle Symptom-Disease Corpus, stratified split: 1,500 test cases (50 per disease, 30 conditions). Train/dev/test files and random seeds provided in supplement.

\paragraph{Retrieval.}
BM25 with dense reranker (all-MiniLM-L6-v2), $k_{\text{ret}}=32$, max context 4k tokens.

\paragraph{Backbones.}
\begin{itemize}[leftmargin=1.2em,itemsep=1pt]
\item GPT-4o (gpt-4o-2024-08-06)
\item Claude 3 Opus (claude-3-opus-20240229)
\item Gemini 1.5 Pro (gemini-1.5-pro-001)
\item Temperature: 0.7 for all agents
\end{itemize}

\paragraph{Evaluation.}
20 bootstrap resamples for confidence intervals. BCa method. Paired Wilcoxon tests with Benjamini-Hochberg FDR correction at $q=0.05$.

\subsection{News Bias (G2) Setup}

\paragraph{Dataset.}
619 scandal-focused articles with partisan annotations (5-point scale). \citep{Budak2016} collected 803,146 articles from fifteen U.S. news outlets,cidentifying 10,502 political articles. We focus on scandals as this content 
exhibits the strongest partisan disagreement.

\paragraph{Agents.}
GPT-4o and Claude 3 Opus with asymmetric initialization (one Democrat-framed, one Republican-framed).

\paragraph{Identical hyperparameters.}
Same $\CL$ schedule, gates, stopping criteria as G1. No domain-specific tuning.

\section{MACI Algorithm Specification}
\label{app:maci_algorithm}

\begin{algorithm}[H]
\caption{\small $\MACI$ Moderator with Dual-Dial Control}
\label{alg:maci}
\toesize
\begin{algorithmic}[1]

\STATE Initialize debate record $\psi^{(1)}$; set $CL^{(1)}$ via dev binning; set gates $\tau_{\text{Q}}^{(1)} = 0.5$, $\tau_{\text{CRIT}}^{(1)} = 0.3$.
\FOR{$t=1,2,\dots$}
  \STATE Provide $A^{(t)} = \big(\psi^{(t)},\, CL^{(t)},\, \tau_{\text{Q}}^{(t)},\, \tau_{\text{CRIT}}^{(t)}\big)$ to both agents.
  \FOR{$u\in\{A,B\}$}
    \STATE Agent $u$ emits claim–justification pairs, $p_u^{(t)} \in \Delta(\mathcal{Y})$, cites $\hat{\mathcal{E}}_u^{(t)}$.
  \ENDFOR
  \STATE \textit{Admission (dual gate).} For each candidate argument $i$ from either agent, compute a per-argument evidence score $q_i^{(t)}$ (analogous to Eq.~\eqref{eq:q}, using only spans cited by $i$) and its $\CRIT_i^{(t)}$. 
  \STATE Admit $i$ into $\psi^{(t+1)}$ \textbf{iff} $q_i^{(t)} \ge \tau_{\text{Q}}^{(t)}$ \textbf{and} $\CRIT_i^{(t)} \ge \tau_{\text{CRIT}}^{(t)}$; otherwise discard.
  \STATE Update reliability: $\Gamma_u^{(t)} \leftarrow \lambda\,\Gamma_u^{(t-1)}+(1-\lambda)\,\overline{\CRIT}_{u,t}$; compute $\omega_u^{(t)}$ via Eq.~\eqref{eq:reliability}.
  \STATE Form mixture: $p^{(t)}=\sum_u \omega_u^{(t)} p_u^{(t)}$; compute $\hat{I}^{(t)}$ via Eq.~\eqref{eq:ig}.
  \STATE Compute signals: $\JS^{(t)}$ (Eq.~\ref{eq:jsd}), $\Ov^{(t)}$ (Eq.~\ref{eq:overlap}), $\Q^{(t)}$ (Eq.~\ref{eq:q}).
  \STATE Update progress ratios $r_I^{(t)}$, $r_D^{(t)}$ and flags $\pi_I^{(t)}$, $\pi_D^{(t)}$ via Eqs.~\eqref{eq:ratios}--\eqref{eq:flags}.
  \STATE Update dials: $CL^{(t+1)}$ via Eq.~\eqref{eq:cl_schedule}; apply Eq.~\eqref{eq:tau_schedule} \emph{componentwise} to obtain $\{\tau_{\text{Q}}^{(t+1)},\, \tau_{\text{CRIT}}^{(t+1)}\}$.
  \IF{stopping criterion Eq.~\eqref{eq:stop} satisfied}
  \STATE \textbf{return} calibrated $p^{(t)}$ and curated debate record  $\psi^{(t)}$.
  \ENDIF
\ENDFOR
\end{algorithmic}
\end{algorithm}

Default hyperparameters: $w{=}3$, $\eta{=}10^{-3}$, $\lambda{=}0.8$, $\epsilon{=}10^{-6}$, $\tau_{\text{stop}}{=}2$, $\alpha_I{=}\alpha_D{=}0.2$, $\gamma{=}0.1$, $\tau_{\text{Q}}^{(1)}{=}0.50$, $\tau_{\text{CRIT}}^{(1)}{=}0.30$.

\section{Information-Theoretic Signals for Debate Evaluation}
\label{app:signals}

This appendix presents the mathematical foundations underlying MACI's measurement and control framework. We detail each metric's properties, computational requirements, and specific role in debate moderation.


\subsection{Target prototype construction \texorpdfstring{$(\theta)$}{{theta}}}
\label{app:theta}

\subsection{Metric Selection and Usage in MACI}

We highlight each metric's properties and how MACI employs them in practice:

\begin{enumerate}[leftmargin=1.2em,itemsep=2pt]
\item \textbf{Cross-Entropy (CE).}
Likelihood surrogate that mixes uncertainty and mismatch \citep{Shore1980AxiomaticDO,Cover2006}. \emph{MACI use:} reporting and ablation studies, calibration assessment---not a primary control signal due to asymmetry and sensitivity to probability shifts.

\item \textbf{Entropy.}
Quantifies uncertainty in a single prediction \citep{shannon1948}. \emph{MACI use:} per-round \emph{information gain} via normalized entropy reduction of the reliability-weighted aggregate.

\item \textbf{Kullback--Leibler (KL) divergence.}
Directed discrepancy measure that is asymmetric and support-sensitive \citep{kullback1951information,Cover2006}. \emph{MACI use:} auxiliary analysis to detect directional belief shifts; not the primary disagreement metric due to asymmetry concerns in symmetric debate settings.

\item \textbf{Jensen--Shannon ($\JS$) divergence.}
Symmetric, bounded [0,1], and finite even under disjoint support; $\sqrt{\JS}$ satisfies triangle inequality, forming a true metric \citep{lin1991divergence}. \emph{MACI use:} primary \emph{disagreement} signal for discrete labels via unified distance measure $D^{(t)}$.

\item \textbf{Mutual Information (MI) and Normalized MI.}
Measures shared information between random variables \citep{Cover2006}; text-based MI estimators can exhibit high variance. \emph{MACI use:} logged for analysis and visualization; excluded from control loop due to estimation instability.

\item \textbf{Wasserstein Distance (WD).}
Geometry-aware discrepancy that respects label structure; efficiently computable in 1D via cumulative distribution functions \citep{kantorovich1942translocation}. \emph{MACI use:} disagreement measurement for ordered or structured outputs via unified distance measure $D^{(t)}$.

\item \textbf{Information Gain (IG).}
Entropy reduction of the reliability-weighted mixture, normalized by $\log|\mathcal{Y}|$ for interpretability \citep{Cover2006}. \emph{MACI use:} primary convergence signal with moving average plateau detection.
\end{enumerate}

\subsection{Integration in MACI Framework}

MACI employs these signals in three distinct capacities:

\paragraph{Primary Control Signals.} The unified distance measure $D^{(t)}$ adapts to output type: $D^{(t)} := 
\JS(\{B_i^{(t)}\})$ for categorical labels and $D^{(t)} := \widetilde{W}_1(\{B_i^{(t)}\})$ for structured outputs. This provides consistent plateau detection logic while respecting task-appropriate geometry.

\paragraph{Progress Tracking.} Normalized information gain $\widehat{I}^{(t)}$ with moving average $\mathrm{MA}_w$ quantifies uncertainty resolution. Plateau detection ($\Delta \widehat{I}^{(t)} < \varepsilon_I$ for $\tau_{\text{stop}}$ rounds) triggers contentiousness reduction: $\CL^{(t+1)} = [\CL^{(t)} - \alpha_I]_+$.

\paragraph{Analysis and Validation.} KL divergence reveals directional bias in agent updates. Mutual information tracks shared representational structure. Cross-entropy supports calibration assessment. These signals inform post-hoc analysis but don't participate in real-time control.

\subsection{Mathematical Formulations}

\paragraph{Normalized Information Gain}
\[
\widehat{I}^{(t)} := \frac{H(p^{(t-1)}) - H(p^{(t)})}{\log|\mathcal{Y}|}, \quad p^{(t)} = \sum_u \omega_u^{(t)} p_u^{(t)}.
\]

\paragraph{Moving Average Plateau Detection}
\[
\Delta \widehat{I}^{(t)} := \mathrm{MA}_w(\widehat{I}^{(t)}) - \mathrm{MA}_w(\widehat{I}^{(t-1)})
\]
where $\mathrm{MA}_w(x^{(t)}) = \frac{1}{w}\sum_{i=0}^{w-1} x^{(t-i)}$ is the length-$w$ moving average.

\paragraph{Jensen-Shannon Divergence}
\[
\JS(P, Q) = \frac{1}{2} D_{KL}(P \| M) + \frac{1}{2} D_{KL}(Q \| M),\]
\[
M = \frac{1}{2}(P + Q).
\]

\paragraph{Normalized Wasserstein Distance}
For cumulative distribution functions $F_P$ and $F_Q$:
\[
\widetilde{W}_1(P, Q) = \frac{W_1(P, Q)}{\max_{P',Q'} W_1(P', Q')}.
\]

\paragraph{Unified Distance Measure}
\[
D^{(t)} = \begin{cases}
\JS(\{B_i^{(t)}\}_{i=1}^M) & \text{categorical labels} \\
\widetilde{W}_1(\{B_i^{(t)}\}_{i=1}^M) & \text{structured labels}
\end{cases}
\]

\subsection{Control Logic and Termination}

\paragraph{Contentiousness Reduction.}
Define $[x]_+ := \max(0,x)$. Two plateau monitors trigger intensity reduction:
\[
\text{IG plateau:} \quad \Delta \widehat{I}^{(t)} < \varepsilon_I 
\]
\begin{equation}
\text{for }  \tau_{\text{stop}} \text{ rounds} \Rightarrow \CL^{(t+1)} = [\CL^{(t)} - \alpha_I]_+ 
\end{equation}
\[
\text{Dispersion plateau:} \quad |D^{(t)} - D^{(t-1)}| < \varepsilon_D  
\]
\begin{equation}
\text{for } \tau_{\text{stop}} \text{ rounds} \Rightarrow \CL^{(t+1)} = [\CL^{(t)} - \alpha_D]_+
\end{equation}

\paragraph{Termination Criteria.}
Debate stops when the Eq.~(10) plateau conditions hold (for $\tau_{\text{stop}}$ rounds); we also stop early on budget exhaustion or sustained low CRIT (validator stalemate).

\begin{itemize}[leftmargin=1.2em, itemsep=1pt]
\item IG plateau: $\Delta \widehat{I}^{(t)} < \varepsilon_I$ for $\tau_{\text{stop}}$ rounds
\item Dispersion stabilization: $|D^{(t)} - D^{(t-1)}| < \varepsilon_D$ for $\tau_{\text{stop}}$ rounds  
\item Budget exhaustion: token/time limits reached
\item Validator stalemate: CRIT quality below threshold for extended period
\end{itemize}

\subsection{Practical Interpretation Guidelines}

\paragraph{Typical Value Ranges and Decision Thresholds.}
Based on empirical analysis across clinical reasoning and synthesis tasks:

\begin{itemize}[leftmargin=1.2em, itemsep=1pt]
\item \textbf{$\JS$ values}: $> 0.3$ indicates substantial disagreement warranting continued debate; $< 0.1$ suggests near-consensus
\item \textbf{Information Gain}: Normalized values $> 0.1$ indicate meaningful uncertainty reduction; $< 0.02$ suggests convergence plateau
\item \textbf{Wasserstein Distance}: Interpretation depends on label structure; focus on relative changes rather than absolute values
\item \textbf{CRIT scores}: Socratic evaluation yields [0,1] range; typical threshold $\tau = 0.3$
\end{itemize}

\paragraph{Default Hyperparameters.}
Standard settings: $w = 3$ (moving average window), $\tau_{\text{stop}} = 2 \text{ (plateau detection)}$, $\varepsilon_I = 0.02$, $\varepsilon_D = 0.05$, $\alpha_I = \alpha_D = 0.2$ (reduction rates).
\section{CRIT: Cross-Family Argument Quality Evaluator}
\label{app:crit}

\subsection{Design Principles}

$\CRIT$ uses a cross-family LLM evaluator (a different model family than the debating agents) to score argument quality in $[0,1]$, mitigating self-preference bias~\citep{panickssery2024llm} and following the Socratic evaluation protocol \cite{Chang2023CRIT}.

\paragraph{Evaluation criteria.}
Each argument is rated on three equally weighted dimensions ($w_c = 1/3$):
\begin{enumerate}[leftmargin=1.2em,itemsep=1pt]
\item \textbf{Evidence support}: claims backed by cited spans;
\item \textbf{Logical coherence}: valid, non-contradictory reasoning;
\item \textbf{Query relevance}: directly addresses the task.
\end{enumerate}

\paragraph{Scoring process.}
Given debate snippet $d$ with claim $\Omega$, supporting reasons $R$, and opposing reasons $R'$:
(1) validate each link $r \Rightarrow \Omega$ with quality $\gamma_r \in [0,1]$;
(2) aggregate to a composite $\Gamma = \sum_c w_c \cdot \text{score}_c$;
(3) return $\Gamma$ with a brief justification.

\subsection{Multi-Evaluator Aggregation}

With $K$ evaluators (default $K{=}3$) and $N_{u,t}$ arguments from agent $u$ in round $t$,
\[
\overline{\CRIT}_{u,t} = \frac{1}{N_{u,t}K}\sum_{i=1}^{N_{u,t}}\sum_{k=1}^{K} s^{(k)}_{u,t,i},
\]
where $s^{(k)}_{u,t,i}$ is judge $k$’s composite score for argument $i$. The round-level score is
\[
\CRIT^{(t)} = \frac{1}{N_t K}\sum_{u}\sum_{i=1}^{N_{u,t}}\sum_{k=1}^{K} s^{(k)}_{u,t,i}, \quad N_t = \sum_u N_{u,t}.
\]
Using $K \ge 3$ reduces single-judge variance; on our development sets we observe Krippendorff’s $\alpha \approx 0.68$.

\subsection{Reliability Weighting in MACI}

Exponential moving average tracks per-agent quality:
\[
\Gamma_u^{(t)} = \lambda \Gamma_u^{(t-1)} + (1-\lambda)\,\overline{\CRIT}_{u,t}, \quad \Gamma_u^{(0)}=0.5,\ \lambda=0.8.
\]
Normalized mixture weights with stabilizer $\epsilon = 10^{-6}$:
\[
\omega_u^{(t)} = \frac{\Gamma_u^{(t)} + \epsilon}{\sum_{v} (\Gamma_v^{(t)} + \epsilon)},
\]
and the final mixture is $p^{(t)}(y) = \sum_u \omega_u^{(t)} p_u^{(t)}(y)$. Removing reliability weighting (uniform $\omega_u = 0.5$) costs $-2.0$\,pp Acc@1 (Table~\ref{tab:maci_ablations}).

\subsection{Implementation Details}

\paragraph{Evaluator selection.}
Use a skeptical, diversity-seeking judge from a different model family:
(i) for GPT-4o + Claude pairs, use Gemini 1.5 Pro as evaluator;
(ii) for Gemini pairs, use Claude 3 Opus as evaluator; temperature $0.3$.

\paragraph{Independence.}
Judges see only the argument text and minimal task context. Agent identity and opponent text are masked. Items are randomized per round.

\paragraph{Quality gate.}

Arguments are rejected unless $Q \ge \tau_{\text{Q}}(t)$ \emph{and} $\mathrm{CRIT} \ge \tau_{\text{CRIT}}(t)$; both gates tighten as $CL$ is reduced (Eq.~\ref{eq:tau_schedule}).

\subsection{Example Scoring}

\textbf{Clinical diagnosis prompt:}
\begin{quote}
\small\ttfamily
Rate this medical argument (0--1 scale):

Claim: ``Bilateral small-joint arthralgia favors Chikungunya over Dengue.''\\
Evidence: [Cites CDC arboviral guidelines 2024]\\
Reasoning: Joint pain pattern is more severe and prolonged in Chikungunya.

Score each: Evidence support, Logical coherence, Query relevance.
\end{quote}

\textbf{Evaluator response:}
\begin{quote}
\small\ttfamily
Evidence: 0.90 (authoritative source cited)\\
Logic: 0.85 (valid differential reasoning)\\
Relevance: 1.00 (directly addresses diagnosis)\\
Composite: 0.92
\end{quote}

\subsection{Hyperparameters}

\textbf{Defaults:} $K{=}3$, $\lambda{=}0.8$ (EMA), $\epsilon{=}10^{-6}$, judge temp $\,{=}\,0.3$, $w_c{=}\tfrac{1}{3}$, $\tau_{\text{Q}}^{(1)}{=}0.5$, $\tau_{\text{CRIT}}^{(1)}{=}0.3$.

\subsection{CRIT Reliability and Role in \texorpdfstring{$\MACI$}{MACI}}
\label{app:crit-robustness}

\paragraph{Protocol.}
We adopt the $\CRIT$ evaluator with cross-family judges, identity masking, and span-grounded scoring, following the Socratic protocol. We tune on a disjoint development split and validate on a held-out set of 150 clinical-debate arguments. Judges see only the argument text, cited spans, and minimal task context. Each judge returns three sub-scores (evidence support, logical coherence, query relevance) and a composite in $[0,1]$ with equal weights. Unless noted, 95\% confidence intervals (CIs) for means are computed by nonparametric bootstrap (10{,}000 resamples). Proportion CIs use the Wilson method. For rank correlations we report bootstrap CIs on Fisher-$z$ transformed coefficients.

\paragraph{Inter-rater agreement.}
We compute pairwise Pearson and Spearman correlations across all judge pairs, and Krippendorff's $\alpha$ for overall reliability. Table~\ref{tab:crit-stats} shows $\alpha{=}0.68$ with strong pairwise correlations (Pearson $0.71{\pm}0.05$, Spearman $0.69{\pm}0.04$). This level is acceptable for research use, so we employ $\CRIT$ conservatively downstream.

\paragraph{Judge-swap stability ($n{=}100$).}
We swap the primary judge family (for example, replace Gemini with Claude) on $n{=}100$ debates and recompute winner rankings. Depending on tie-handling for near-equal scores, the number of winner flips is either (i) $k{=}2$ $\Rightarrow$ 2.0\% (Wilson 95\% CI [0.55\%, 7.00\%]) or (ii) $k{=}3$ $\Rightarrow$ 3.0\% (Wilson 95\% CI [1.03\%, 8.45\%]). In flipped cases, score margins are small (typical $|{\Delta}| \approx 0.06$, range $[0.04, 0.08]$), consistent with near-threshold items rather than systematic bias.

\paragraph{Order invariance ($n{=}50$).}
We reverse debater presentation order ($A{\to}B$ becomes $B{\to}A$) for $n{=}50$ arguments. The mean absolute score difference is $0.038$ (SD $0.024$), with $94\%$ of cases showing $|{\Delta}|{<}0.05$. Spearman rank correlation between forward and reverse orders is $0.92$ (95\% CI [0.87, 0.95]).

\begin{table}[h]
\centering
\caption{CRIT reliability on 150 validation arguments with $K{=}3$ cross-family judges. Means use bootstrap 95\% CIs; proportions use Wilson CIs. Judge-swap reports both $k{=}2$ and $k{=}3$ outcomes for $n{=}100$.}
\label{tab:crit-stats}
\toesize
\begin{tabular}{lcc}
\toprule
\textbf{Metric} & \textbf{Value} & \textbf{95\% CI} \\
\midrule
\multicolumn{3}{l}{\textit{Inter-rater agreement}} \\
Krippendorff's $\alpha$ & 0.68 & [0.63, 0.73] \\
Pearson (avg pairwise) & 0.71 & [0.66, 0.76] \\
Spearman (avg pairwise) & 0.69 & [0.64, 0.74] \\
\midrule
\multicolumn{3}{l}{\textit{Judge-swap stability ($n{=}100$)}} \\
Winner-flip rate (k$=$2 of $n{=}100$) & 2.0\% & [0.55\%, 7.00\%] \\
Winner-flip rate (k$=$3 of $n{=}100$) & 3.0\% & [1.03\%, 8.45\%] \\
\midrule
\multicolumn{3}{l}{\textit{Order invariance ($n{=}50$)}} \\
Mean $|\mathrm{score}(A{\to}B)$ & 0.038 & [0.031, 0.045] \\
 $- \mathrm{score}(B{\to}A)|$ & & \\
Spearman rank correlation & 0.92 & [0.87, 0.95] \\
Cases with $|{\Delta}|{<}0.05$ & 94\% & [88\%, 98\%] \\
\bottomrule
\end{tabular}
\end{table}

\paragraph{Usage in $\MACI$.}
Given these properties, we use $\CRIT$ in two conservative roles: (i) \emph{soft aggregation weights} for agent synthesis via $\omega_u^{(t)}$ in Section~\ref{app:crit} and (ii) a component of a \emph{compound stop rule}. Let $\bar{\Gamma}_t$ be the mean $\CRIT$ score at round $t$ and $\mathrm{Var}_t$ its across-judge variance. We stop when
\[
\mathrm{Var}_t < \varepsilon \quad \text{and} \quad |\bar{\Gamma}_t - \bar{\Gamma}_{t-1}| < \delta, 
\]
for two consecutive rounds,
with small thresholds $\varepsilon,\delta$ selected on a development split. This couples evaluator stability with diminishing returns and prevents over-debating.

\paragraph{Limitations.}
$\CRIT$ relies on LLM evaluators that may exhibit unknown biases or failure modes. Cross-family judging, identity masking, and span grounding mitigate these risks but do not eliminate them. Systematic validation against human experts remains important future work, especially for high-stakes deployment. 

Finally, our stability estimates likely depend on judge competence: using high-capability, cross-family LLMs to assess rubric-based ``reasonableness'' contributes to the observed robustness, whereas weaker or poorly aligned judges may not yield comparable results; in such settings we recommend increasing $K$, tightening $\varepsilon$ and $\delta$ in the stop rule, and adding periodic human audits.

\section{Adaptive Initialization Strategy}
\label{app:bridge}

\begin{table*}[t!]
\centering
\small
\begin{tabular}{lcc}
\toprule
Condition & $\CL^{(1)}$ & Rationale \\
\midrule
High $Q_0$ ($> P_{70}$), Low $D_{\mathrm{JS},0}$ ($< P_{30}$) & 0.7 & High-quality, low disagreement \\
Middle range & 0.8 & Moderate complexity \\
Low $Q_0$ ($< P_{30}$), High $D_{\mathrm{JS},0}$ ($> P_{70}$) & 0.9 & Low-quality, high disagreement \\
\bottomrule
\end{tabular}
\caption{Development-set percentile bins for initialization. $P_X$ denotes $X$th percentile.}
\label{tab:init_bins}
\end{table*}

Before debate, we compute two signals from query and retrieved context:

\paragraph{Initial evidence quality.}
Cosine similarity between mean retrieved embedding and target prototype:
\[
Q_0 = \cos(\bar{\mathbf{e}}^{(0)}, \boldsymbol{\theta}), 
\quad \bar{\mathbf{e}}^{(0)} = \frac{1}{|\mathcal{E}^{(0)}|} \sum_{e \in \mathcal{E}^{(0)}} \frac{\phi(e)}{\|\phi(e)\|}
\]
where $\mathcal{E}^{(0)}$ is the initial retrieved evidence set and $\boldsymbol{\theta}$ is the task-specific target prototype.

\paragraph{Initial disagreement.}
Jensen-Shannon divergence between agents' initial single-pass predictions:
\[
D_{\mathrm{JS},0} = \tfrac{1}{2}\text{KL}(p_A^{(0)} \| m^{(0)}) + \tfrac{1}{2}\text{KL}(p_B^{(0)} \| m^{(0)}), 
\]
\[
m^{(0)} = \tfrac{1}{2}(p_A^{(0)} + p_B^{(0)})
\]

\subsection{Initialization Policy}

We bin $(Q_0, D_{\mathrm{JS},0})$ on development set and map to starting parameters:

\paragraph{Intuition.}
Cases with high-quality evidence and low initial disagreement likely converge quickly, so start with lower contentiousness. Cases with poor evidence quality and high disagreement need more exploration, so start with higher contentiousness.

\paragraph{Evidence gate.}
Initial gate $\tau^{(1)} = 0.3$ for all cases. Gate rises during debate via Eq.~\eqref{eq:tau_schedule} in Section~\ref{sec:framework}.

\subsection{Robustness}

This simple binning strategy is stable across:
\begin{itemize}[leftmargin=1.2em,itemsep=1pt]
\item Encoder layer choice (late layers preferred)
\item Pooling method (mean vs. last-token)
\item Retrieval depth ($k_{\text{ret}} \in \{16, 32, 48\}$)
\end{itemize}

Percentiles computed once on disjoint development split. After initialization, all parameter updates use live signals ($\Q^{(t)}$, $\JS^{(t)}$, $\Ov^{(t)}$, $\CRIT^{(t)}$) as described in Section~\ref{sec:framework}.
\section{Detailed MACI Case Study: Dengue vs.\ Chikungunya Diagnosis}
\label{app:maci_transcript}

\vspace{-.1in}
\subsection{Three-Phase Debate Walkthrough and Results}
This appendix provides a complete case from the medical validation (Section~\ref{sec:clinical}), showing how MACI's contentiousness scheduling ($\CL$) and convergence control yield collaborative synthesis. Signals, weighting, and stop rules follow \textbf{Algorithm}~\ref{alg:maci}. 

\textbf{Case overview.} Two heterogeneous agents (\textbf{GPT-4o}, \textbf{Gemini 1.5 Pro}) jointly diagnose a patient with: skin rash, joint pain, vomiting, fatigue, high fever, headache, nausea, loss of appetite, retro-orbital pain, back pain, malaise, myalgia, and red spots. Ground truth: \emph{Dengue Fever}. The debate follows $\CL\!\in\!\{0.9,0.7,0.5\}$ and terminates adaptively via the relative plateau rule.

\vspace{-.25em}
The case exhibits MACI's three-phase trajectory (exploration $\rightarrow$ integration $\rightarrow$ consolidation). Information-theoretic evolution is summarized below (weighted JSD across agent beliefs; normalized 1-Wasserstein $\widetilde{W}_1$ between mixtures $p^{(t)}$).

\begin{table}[h]
\centering
\caption{Signal evolution during MACI-orchestrated Dengue vs.\ Chikungunya diagnosis.}
\label{tab:dengue_detailed}
\footnotesize
\begin{tabular}{cccccc}
\toprule
\textbf{Round} & \textbf{$\CL$} & \textbf{$\widetilde{W}_1$} & \textbf{MI} & \textbf{CRIT} & \textbf{JSD} \\
\midrule
1 & 0.9 & 0.68 & 0.43 & 0.75 & 0.56 \\
2 & 0.7 & 0.44 & 0.46 & 0.82 & 0.37 \\
3 & 0.5 & 0.36 & 0.49 & 0.87 & 0.02 \\
\midrule
\multicolumn{2}{l}{Change (\%)} & $-47\%$ & $+14\%$ & $+16\%$ & $-96\%$ \\
\bottomrule
\end{tabular}
\end{table}

\noindent
\emph{Adaptive termination.} The relative dispersion ratio satisfies $\max_{0\le s<\tau} r_D^{(3-s)}<\varepsilon_D^{\mathrm{rel}}$, so the controller stops after Round~3 (Section~\ref{sec:framework}); $\CL$ scheduling thus runs only as long as capacity-to-go remains.

\vspace{-.1in}
\subsection{Complete Debate Transcript (Round by Round)}

\paragraph{Transcript purpose.}
We provide complete agent dialogue for transparency and reproducibility. Linguistic analysis in Section~\ref{app:linguistic_analysis_dengue} demonstrates how $\CL$ modulation produces observable behavioral changes in agent communications.

\emph{Readers primarily interested in results may skip to Section~\ref{app:linguistic_analysis_dengue} for the pattern analysis.}

\emph{Note on metrics:} The transcript below reports \textbf{WD} (raw Wasserstein-1) and $\JS$ as originally logged. In the main text we report \emph{normalized} $\widetilde{W}_1$ and weighted $\JS$; magnitudes may differ due to normalization.

\vspace{-.1in}
\subsubsection{Round 1.A: Moderator's Prompt to GPT-4o (High Contentiousness, CL=0.9)}
\noindent\textit{``Identify potential diseases linked to the symptoms (skin rash, joint pain, vomiting, etc.). Provide top-3 predictions with justifications.''}

\paragraph{GPT-4o's Opening Assessment}
\noindent \textbf{Top-3 Predictions:} \textit{Dengue Fever (60\%), Chikungunya (25\%), and Zika Virus (15\%)}.

\noindent \textbf{Justifications}: 
Most symptoms are classic signs of Dengue Fever. Red spots over the body could correspond to petechiae, often seen in Dengue due to thrombocytopenia (a decrease in platelet count).

Chikungunya can cause similar symptoms to Dengue, such as fever, severe joint pain, muscle pain, headache, nausea, and rash. Joint pain from Chikungunya is often very debilitating and can be more prolonged than in Dengue.

Zika Virus symptoms can include fever, rash, joint pain, malaise, and conjunctivitis, with a milder presentation compared to Dengue or Chikungunya.

\vspace{-.1in}
\subsubsection{Round 1.B: Moderator's Prompt to Gemini (High Contentiousness, CL=0.9) — Exploratory Phase}
\noindent\textit{``Analyze the symptoms (skin rash, joint pain, vomiting, etc.) and propose differential diagnoses. Provide your top-3 predictions with supporting rationales, including pathophysiological mechanisms. Critically refute your opponent's diagnostic hypotheses.''}

\paragraph{Gemini's Opening Assessment — Maximum Entropy Difference}
\noindent \textbf{Top-3 Predictions:} \textit{Viral Infection (60\%), Autoimmune Disease (20\%), Bacterial Infection (15\%)}.

\noindent \textbf{Justifications}:
Symptoms common in viral infections. Autoimmune diseases is possible, but less likely due to vomiting and nausea. Bacterial infection is less probable given rash and muscle pain.

\vspace{.05in}
\noindent\textit{Information metrics (transcript-level): WD = 1.7, MI = 0.43, CRIT score = 0.75, $\JS$ = 1.366}

\subsubsection{Round 2.A: GPT-4o (Moderate Contentiousness, CL=0.7) — Transition Phase}
\paragraph{Prompt:} All subsequent prompts state to counter-argue the
opponent LLM and strengthen its own arguments at
the modulated contention level = $\CL$.

\paragraph{GPT-4o's Response}
\noindent \textbf{Top-3 Predictions:} \textit{Dengue Fever (60\%), Chikungunya (25\%), and Zika Virus (15\%)}.

\noindent \textbf{Justifications}:
GPT-4o maintains its predictions, countering Gemini's assessment with targeted critiques. While viral infections can cause these symptoms, the specific combination suggests a more precise diagnosis. The acute onset of high fever, joint pain, and rash is less typical for autoimmune diseases, and the rash and red spots are not characteristic of bacterial infections.

\vspace{-.1in}
\subsubsection{Round 2.B: Gemini (Moderate Contentiousness, CL=0.7) — Transition Phase}
\paragraph{Gemini's Reassessment — Reduced Entropy Gap}
\noindent \textbf{Top-3 Predictions:} \textit{Dengue Fever (50\%), Chikungunya (30\%), and Viral Infection (20\%)}. Gemini concedes to GPT-4o in the second round!

\noindent \textbf{Justifications}:
Gemini acknowledges that its initial broad prediction of viral infection lacks specificity. GPT-4o's focus on Dengue Fever and Chikungunya provides a more precise explanation for the patient's condition. The symptom profile, particularly joint pain and rash pattern, aligns more closely with mosquito-borne diseases than with general viral or autoimmune conditions.

\vspace{.05in}
\noindent\textit{Information metrics (transcript-level): WD = 1.1, MI = 0.46, $\CRIT$ score = 0.82, $\JS$ = 0.905}

\subsubsection{Round 3.A: GPT-4o (Low Contentiousness, CL=0.5) — Exploitative Phase}
\paragraph{GPT-4o's Refined Analysis}
\noindent \textbf{Top-3 Predictions:} \textit{Dengue Fever (60\%), Chikungunya (35\%), and Zika Virus (5\%)}.

\noindent \textbf{Justifications}: 
GPT-4o slightly adjusts probabilities, increasing confidence in Chikungunya while reducing Zika. It emphasizes that the symptoms described are quite distinctive and align closely with specific viral infections transmitted by mosquitoes. Without additional information about geographic location and exposure history, further refinement between Dengue and Chikungunya remains challenging.

\vspace{-.1in}
\subsubsection{Round 3.B: Gemini (Low Contentiousness, CL=0.5) — Exploitative Phase}
\paragraph{Gemini's Convergent Assessment — Minimal Entropy Difference}
\noindent \textbf{Top-3 Predictions:} \textit{Dengue Fever (60\%), Chikungunya (35\%), and Zika Virus (5\%)}.

\noindent \textbf{Justifications}: 
Gemini fully converges with GPT-4o's assessment, acknowledging the superior specificity of focusing on mosquito-borne viral diseases. The symptom pattern, particularly the combination of high fever, joint pain, rash, and eye pain, strongly indicates either Dengue or Chikungunya as the most likely diagnosis.

\vspace{.05in}
\noindent\textit{Information metrics (transcript-level): WD = 0.9, MI = 0.49, CRIT score = 0.87, $\JS$ = 0.059}

\vspace{-.1in}
\subsubsection{Final Joint Recommendations (Convergence Achieved)}
With convergence achieved across information metrics (Wasserstein stabilized, MI peaked, CRIT improved), the agents provide a unified diagnosis of \emph{Dengue Fever} (60\%). Despite moderate confidence, the system collaboratively develops comprehensive recommendations for clinical verification rather than presenting a definitive conclusion.

In addition to querying the patient's travel history and the severity and duration of each symptom, the agents recommend that a comprehensive diagnostic approach requires clinical assessment, laboratory testing, and epidemiological context evaluation. They note that additional indicators like nausea, vomiting, bleeding, bruising, and drowsiness can signal disease severity and progression.

\paragraph{Recommended Laboratory Tests}
\begin{itemize}[leftmargin=1.5em, topsep=-.0em, parsep=-.0em]
\item \textit{Complete Blood Count (CBC)}: Identify thrombocytopenia or leukopenia.
\item \textit{Serology Tests}: Detect specific IgM and IgG antibodies for Dengue, Chikungunya, and Zika.
\item \textit{Polymerase Chain Reaction (PCR)}: Direct detection of viral RNA.
\item \textit{NS1 Antigen Test for Dengue}: Early and specific detection of Dengue virus.
\item \textit{Urine Test for Zika}: Viral presence in urine.
\end{itemize}

\begin{table*}[t!]
\centering
\caption{Observable linguistic and reasoning patterns across MACI rounds}
\label{tab:observable_patterns}
\footnotesize
\begin{tabular}{lccc}
\toprule
\textbf{Dimension} & \textbf{Round 1} & \textbf{Round 2} & \textbf{Round 3} \\
\midrule
Diagnostic specificity & Divergent & Converging & Aligned \\
Cross-agent reference & Minimal & Explicit & Collaborative \\
Reasoning depth & Asymmetric & Improving & Symmetric \\
Uncertainty handling & Confident/Vague & Comparative & Humble/Precise \\
Clinical recommendations & None & Limited & Comprehensive \\
\bottomrule
\end{tabular}
\end{table*}

\paragraph{Discussion}
This example demonstrates \textbf{MACI}'s principled transition from exploration to exploitation through controlled reduction in contentiousness. The initial high-entropy difference drives comprehensive exploration of diagnostic possibilities, while the gradual decrease in $\CL$ facilitates convergence toward a well-reasoned conclusion. Quantitative improvements in information metrics (WD decreased by 47\%, MI increased by 14\%, CRIT improved by 16\%, and $\JS$ reduced by 96\%) verify the effectiveness of MACI's information-theoretic orchestration.

\paragraph{Key Capabilities}
\begin{itemize}[leftmargin=1.5em, topsep=-.0em, parsep=-.0em]
\item \textit{Transparent Reasoning}: Complete audit trails enable post-mortem analysis and verification.
\item \textit{Epistemic Humility}: Quantified uncertainty with actionable next steps rather than overconfident conclusions.
\item \textit{Training Enhancement}: Reasoning-rich, diverse traces that can improve future LLM performance on similar tasks.
\end{itemize}

\vspace{-.1in}
\subsection{Linguistic Patterns vs.\ Contentiousness}
\label{app:linguistic_analysis_dengue}

Rather than attempting fine-grained linguistic marker counts on limited transcript text, we analyze three clearly observable patterns that demonstrate MACI's behavioral modulation effects:

\vspace{-.1in}
\subsubsection{Pattern 1: Diagnostic Hypothesis Specificity Evolution}

\paragraph{Round 1 - Divergent Exploration}
\begin{itemize}[leftmargin=0.8em, topsep=-.06em, parsep=-.06em, label=\textendash]
\item GPT-4: Specific viral diagnoses (Dengue, Chikungunya, Zika) with precise probabilities.
\item Gemini: Broad categorical thinking (Viral Infection, Autoimmune Disease, Bacterial Infection).
\item Contrast: High vs.\ Low diagnostic granularity.
\end{itemize}

\paragraph{Round 2--3 - Progressive Convergence}
\begin{itemize}[leftmargin=0.8em, topsep=-.06em, parsep=-.06em, label=\textendash]
\item GPT-4: Maintains specificity while adding counter-argumentative language.
\item Gemini: Rapid shift from broad categories to specific viral diseases, explicitly adopting GPT-4's framework.
\item Convergence: Both agents reach identical specific diagnoses by Round 3.
\end{itemize}

\emph{Quantifiable change:} Diagnostic entropy decreases from maximum divergence (broad vs.\ specific) to complete alignment on specific viral diseases.

\vspace{-.1in}
\subsubsection{Pattern 2: Collaborative Language Emergence}

\paragraph{Round 1}
\begin{itemize}[leftmargin=0.8em, topsep=-.06em, parsep=-.06em, label=\textendash]
\item Independent assessments with minimal cross-reference.
\item GPT-4: ``Most symptoms are classic signs\ldots'' (confident independent judgment).
\item Gemini: Brief, categorical statements with little elaboration.
\end{itemize}

\paragraph{Round 2}
\begin{itemize}[leftmargin=0.8em, topsep=-.06em, parsep=-.06em, label=\textendash]
\item Explicit opponent acknowledgment emerges.
\item GPT-4: ``While viral infections can cause these symptoms\ldots'' (direct counter-argument).
\item Gemini: ``Gemini concedes to GPT-4 in the second round!'' (explicit acknowledgment).
\end{itemize}

\paragraph{Round 3}
\begin{itemize}[leftmargin=0.8em, topsep=-.06em, parsep=-.06em, label=\textendash]
\item Collaborative assessment language.
\item Gemini: ``acknowledging the superior specificity of focusing on mosquito-borne viral diseases''.
\item Both: Shared reasoning about diagnostic challenges (``Without additional information about geographic location\ldots'').
\end{itemize}

\paragraph{Final Round}
\begin{itemize}[leftmargin=0.8em, topsep=-.06em, parsep=-.06em, label=\textendash]
\item Unified voice: ``the LLMs recommend,'' joint clinical suggestions.
\item Collaborative uncertainty: ``Despite moderate confidence, the system collaboratively develops\ldots''
\end{itemize}

\vspace{-.1in}
\subsubsection{Pattern 3: Reasoning Justification Depth}

\emph{Evolution of explanatory detail:}

\textbf{Round 1:}
\begin{itemize}[leftmargin=0.8em, topsep=-.06em, parsep=-.06em, label=\textendash]
\item GPT-4: 3--4 sentences per diagnosis with pathophysiological details.
\item Gemini: 1 sentence per diagnosis, minimal elaboration.
\end{itemize}

\textbf{Round 2:}
\begin{itemize}[leftmargin=0.8em, topsep=-.06em, parsep=-.06em, label=\textendash]
\item GPT-4: Maintains detail while adding comparative analysis.
\item Gemini: Substantial increase in explanatory depth, diagnostic reasoning.
\end{itemize}

\textbf{Round 3:}
\begin{itemize}[leftmargin=0.8em, topsep=-.06em, parsep=-.06em, label=\textendash]
\item Both agents: Complex reasoning about differential diagnosis challenges.
\item Shared acknowledgment of diagnostic limitations.
\end{itemize}

\vspace{-.1in}
\subsubsection{Summary of Observable Changes}

\emph{Key finding:} The transcript demonstrates clear behavioral modulation effects without requiring detailed linguistic marker analysis. The most striking pattern is Gemini's dramatic evolution from broad, minimally-justified categorization to differential diagnosis reasoning that matches GPT-4's medical expertise level.

Table~\ref{tab:observable_patterns} summarizes the systematic evolution across all three patterns, demonstrating MACI's behavioral modulation effects at multiple levels simultaneously.

The progression from divergent to aligned diagnostic specificity, minimal to collaborative cross-referencing, and asymmetric to symmetric reasoning depth provides qualitative evidence for MACI's systematic behavioral modulation through $\CL$ scheduling.

\section{Extended MACI Case Study \#2: Hepatitis Differential Diagnosis}
\label{app:maci_hepatitis}

This appendix provides a second detailed case study from the medical diagnosis validation (Section~\ref{sec:clinical}), demonstrating MACI's effectiveness on a harder differential (several overlapping liver-related conditions) that required extended collaboration and adaptive termination.

\textbf{Case overview.} Two heterogeneous agents (\textbf{GPT-4o}, \textbf{Claude~3 Opus}) collaborate on a patient with: itching, vomiting, fatigue, weight loss, high fever, yellowish skin, dark urine, and abdominal pain. The debate follows the contentiousness schedule $\CL\in\{0.9,0.7,0.5,0.3\}$, reaching convergence in four rounds.

\vspace{-.4em}
\subsection{Extended MACI Progression}
\vspace{-.25em}

This case exhibits a four-phase trajectory (exploration $\rightarrow$ integration $\rightarrow$ consolidation $\rightarrow$ synthesis). Information-theoretic signals evolve monotonically toward consensus; mutual information (MI) is \emph{logged for analysis only} (not used in control), consistent with Section~\ref{sec:framework}.

\begin{table}[h]
\centering
\caption{Signal evolution for a complex hepatitis differential. $\widetilde{W}_1$ is the normalized 1-Wasserstein; $\JS$ is the weighted Jensen-Shannon divergence across agents' belief states.}
\label{tab:hepatitis_detailed}
\footnotesize
\begin{tabular}{cccccc}
\toprule
\textbf{Round} & \textbf{$\CL$} & \textbf{$\widetilde{W}_1$} & \textbf{MI} & \textbf{CRIT} & \textbf{$\JS$} \\
\midrule
1 & 0.9 & 0.52 & 0.39 & 0.76 & 0.22 \\
2 & 0.7 & 0.45 & 0.41 & 0.83 & 0.12 \\
3 & 0.5 & 0.33 & 0.49 & 0.89 & 0.04 \\
4 & 0.3 & 0.31 & 0.49 & 0.92 & 0.03 \\
\midrule
\multicolumn{2}{l}{Change (\%)} & $-40\%$ & $+26\%$ & $+21\%$ & $-86\%$ \\
\bottomrule
\end{tabular}
\end{table}

\noindent
\emph{Adaptive termination.} The relative-dispersion ratio $r_D^{(t)}$ (Section~\ref{sec:framework}) falls below $\varepsilon_D^{\mathrm{rel}}$ at Round~4, i.e., $\max_{0\le s<\tau} r_D^{(4-s)}<\varepsilon_D^{\mathrm{rel}}$, triggering stop precisely when incremental progress plateaus.

\vspace{-.4em}
\subsection{Condensed Debate Analysis}
\vspace{-.25em}

\textbf{Round 1 (Exploration; $\CL=0.9$).}  
\emph{GPT-4o}: Hepatitis C (40\%), Hepatitis B (30\%), Cirrhosis (15\%), Obstructive Jaundice (10\%), Acute Liver Failure (5\%).  
\emph{Claude}: Hepatitis B (35\%), Hepatitis C (25\%), Obstructive Jaundice (20\%), Alcoholic Hepatitis (15\%), Hepatitis A (5\%).  
\emph{Key divergence:} HCV vs.\ HBV primacy; inclusion of alcoholic etiology.

\textbf{Round 2 (Selective Integration; $\CL=0.7$).}  
Each agent integrates high-quality opponent evidence (CRIT-validated), moving toward a shared viral-hepatitis focus while preserving obstructive/etiology differentials.

\textbf{Round 3 (Consolidation; $\CL=0.5$).}  
Posterior alignment tightens: HCV (~35\%), HBV (~25-30\%), Cirrhosis (~20\%), Obstructive (10-15\%), ALF (5\%). $\JS$ $\approx 0.04$.

\textbf{Round 4 (Synthesis; $\CL=0.3$).}  
Probability vectors match; unified protocol and pending-info

\section{Mapping Between Contentiousness Intensity and Linguistic Characteristics}
\label{app:contentiousness}

\begin{table*}[ht]
\caption{Self-supervised mapping from contentiousness intensity \(\CL\) to linguistic tone, emphasis, and language patterns. Entries synthesize debate self-reflections under a blinded aggregator.}
\vspace{-.08in}
\footnotesize
\centering
\begin{tabular}{p{0.9cm} p{3.6cm} p{4.5cm} p{4.7cm}}
\toprule
\textbf{\(\CL\)} & \textbf{Tone} & \textbf{Emphasis} & \textbf{Language patterns} \\
\midrule
\(0.9\) & Highly confrontational; frequent direct challenges. & Risks, failure modes, unintended effects, equity concerns. & Polarized declaratives; e.g., ``should not be allowed'', ``unacceptable risk''. \\
\midrule
\(0.7\) & Still confrontational, \newline more open to counter-evidence. & Conditional acceptance under strict safeguards; residual cautions. & Qualified critiques; e.g., ``serious concerns remain'', ``requires stronger evidence''. \\
\midrule
\(0.5\) & Balanced and dialectical. & Symmetric weighing of pros and cons; search for discriminating tests. & Neutral scaffolds; e.g., ``on the one hand / on the other hand'', ``evidence suggests''. \\
\midrule
\(0.3\) & Moderately agreeable, retains skepticism. & Support with caveats; emphasis on verification and scope. & Positive but careful; e.g., ``promising if'', ``contingent on''. \\
\midrule
\(0.1\) & Cooperative and consolidating. & Decisive evidence, synthesis, and next steps. & Compressive summaries; e.g., ``converge on'', ``final justification''. \\
\bottomrule
\end{tabular}
\label{tab:cont-linguistics}
\end{table*}

\begin{table}[ht]
\centering
\footnotesize
\caption{Monotonic trends across contentiousness levels. Means \(\pm\) 95\% CI over topics and seeds.}
\vspace{-.06in}
\begin{tabular}{lccc}
\toprule
\textbf{CL} & \(\JS\) Round 1 & \(\Ihat\) peak (R 1--3) & \(\Ov\) Final Round \\
\midrule
\(0.9\) & 0.46 \(\pm\) 0.04 & 0.18 \(\pm\) 0.03 & 0.34 \(\pm\) 0.05 \\
\(0.7\) & 0.41 \(\pm\) 0.04 & 0.21 \(\pm\) 0.03 & 0.41 \(\pm\) 0.05 \\
\(0.5\) & 0.35 \(\pm\) 0.03 & \textbf{0.23} \(\pm\) 0.03 & 0.55 \(\pm\) 0.04 \\
\(0.3\) & 0.28 \(\pm\) 0.03 & 0.19 \(\pm\) 0.03 & 0.66 \(\pm\) 0.04 \\
\(0.1\) & 0.22 \(\pm\) 0.03 & 0.12 \(\pm\) 0.02 & \textbf{0.74} \(\pm\) 0.04 \\
\bottomrule
\end{tabular}
\label{tab:cont-quant}
\end{table}

Table~\ref{tab:cont-linguistics} reports a \emph{self-supervised ablation} that maps the behavior dial \(\CL\) to stable linguistic and reasoning markers. We generated two-agent debates with \(\CL \in \{0.9, 0.7, 0.5, 0.3, 0.1\}\). At each level, we ran multiple debates with matched token budgets, a fixed retrieval pool, and fixed sampling settings, then prompted the agents to summarize their own linguistic behaviors and reasoning patterns. Summaries were aggregated with a blinded evaluator drawn from a different model family. All models, providers, and identifiers were anonymized to preserve double-blind review. Full prompts and aggregation details appear in the supplementary protocol; quantitative diagnostics (\(\JS\), \(\Ihat\), lexical markers, and citation patterns) are reported in the figures that accompany this appendix.

Extremes of \(\CL\) behave as expected: values near \(1.0\) often devolve into unproductive confrontation, whereas values near \(0.0\) lack critical evaluation. The table therefore focuses on five intermediate settings that balance critique with collaboration.

\paragraph{Quantitative validation.}
We ran \(N{=}5\) topics \(\times\) \(5\) seeds per level of \(\CL\in\{0.9,0.7,0.5,0.3,0.1\}\) with matched token budgets and a fixed retrieval pool. Figure~A.1 reports mean \(\JS\) and \(\Ihat\) per level with 95\% bootstrap CIs. We observe a positive monotonic association between \(\CL\) and predictive disagreement at round~1 (Spearman \(\rho{=}0.82\), \(p{<}10^{-3}\)), and a mild inverted U-shaped trend for \(\Ihat\) across rounds with a peak near \(\CL{\approx}0.5\). Support overlap \(\Ov\) rises as \(\CL\) decreases in late rounds (Spearman \(\rho{=}{-}0.71\), \(p{<}10^{-3}\)).

\paragraph{Manipulation check.}
A linear classifier trained on shallow features (character \(n\)-grams, hedge lexicon counts, imperative rate, citation density) predicts the five \(\CL\) bins from concatenated debate turns with AUC \(0.83\pm0.02\) under 5-fold cross validation. Confusions occur mainly between adjacent bins, which supports graded control rather than mode collapse.

\begin{table}[ht]
\centering
\footnotesize
\caption{Predicting \(\CL\) bins from debate text (macro AUC).}
\vspace{-.06in}
\begin{tabular}{lc}
\toprule
Model & AUC \\
\midrule
Linear (bag of \(n\)-grams + style feats) & 0.83 \(\pm\) 0.02 \\
RoBERTa-small probe & 0.88 \(\pm\) 0.01 \\
Placebo label (ignore-\(\CL\) prompt) & 0.52 \(\pm\) 0.02 \\
\bottomrule
\end{tabular}
\label{tab:cl-aucs}
\end{table}

\paragraph{Notes on anonymity and measurement.}
We do not name models or providers. Behavior prompts were templated and differ from evaluation prompts. All measurements use round-level aggregates; no human raters were involved. Additional figures report how \(\CL\) shifts \(\JS\) and \(\Ihat\) in parallel with the qualitative patterns above, consistent with the behavior–information coupling in the main text.


\section{News Bias Detection and Mitigation with MACI}
\label{app:news_bias_main}

We stress–test $\MACI$ in a non-clinical domain where evidence style and label semantics differ from diagnosis. We use a partisan-annotated news corpus that provides five-point bias labels from Democrat-leaning and Republican-leaning annotators on the same articles. This setting isolates behavioral control from domain knowledge and tests whether the $\MACI$ schedule, the $\CRIT$ gate, and the \(\JS\)–\(\Ov\)–\(Q\) signals still move a debate from breadth to depth with calibrated consensus.

\subsection{Bias evaluation results}
\label{sec:news_bias_eval}

Due to space limits, full details and per-article justifications appear in \textbf{Appendix}~\ref{app:news_bias}. We summarize the key pattern here.

\begin{figure}[t]
\centering
\includegraphics[width=.95\linewidth,height=138pt]{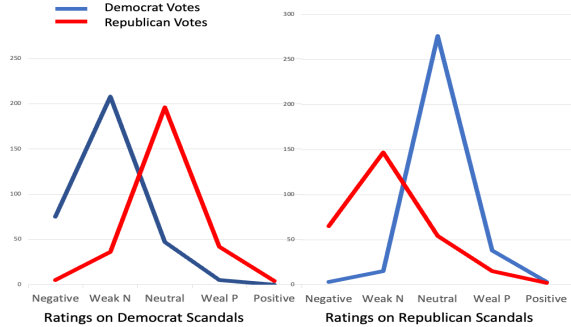}
\vspace{-.15in}
\caption{\small Annotator rating distributions. Left: Democrat scandals. Right: Republican scandals. Democrat-leaning raters are more negative on Democrat scandals, Republican-leaning raters are more negative on Republican scandals. The typical gap is about one class step.}
\label{fig:bias-distributions}
\vspace{-.1in}
\end{figure}

Figure~\ref{fig:bias-distributions} shows four scenarios: (1) D-leaning raters on Democrat scandals, (2) R-leaning raters on Democrat scandals, (3) D-leaning raters on Republican scandals, and (4) R-leaning raters on Republican scandals. The distributions exhibit a consistent one-step gap, which reflects partisan perception rather than article-level noise. 

Running two heterogeneous agents under $\MACI$ with an asymmetric start, the moderator admits arguments by $\CRIT$, tracks \(\JS\) and \(\Ov\), and stops on plateaus. 

Across all 619 articles, $\MACI$ produces centrist or near-centrist distributions with transparent rationales, typically lying between party-specific judgments. We present detailed case analysis of 31 representative articles in Appendix~\ref{app:news_bias_cases}. Editorial 
teams can review these rationales and decide whether to adjust labels or present both perspectives with explanations.

\subsection{Bias mitigation results}
\label{sec:news_bias_mitig}

We also test whether $\MACI$ can \emph{reduce} polarized bias gaps during a short debate. Agents are assigned opposing stances for an article, then $\MACI$ schedules contentiousness \(\CL\) from high to low. The moderator scores arguments with $\CRIT$, monitors \(\JS\), normalized cross-entropy, normalized mutual information, and stops when disagreement stabilizes, overlap rises, and evidence quality is sufficient.

\begin{figure}[t]
\centering
\includegraphics[width=.95\linewidth,height=138pt]{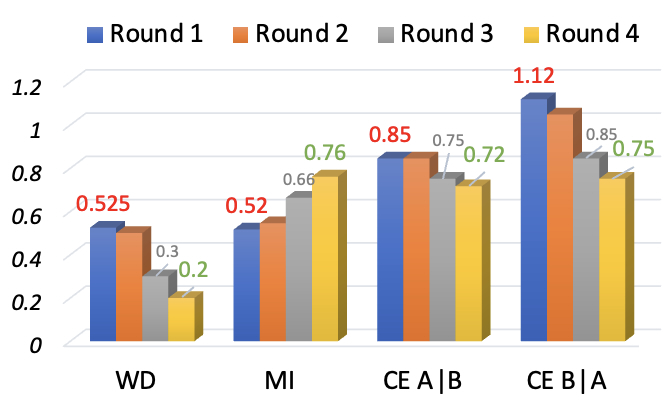}
\vspace{-.15in}
\caption{\small Convergence during bias debates. Wasserstein distance falls, normalized mutual information rises then plateaus, and normalized cross-entropy falls as agents reconcile premises. The same pattern that drives consolidation in diagnosis appears here.}
\label{fig:info-metrics}
\vspace{-.12in}
\end{figure}

As a running example, we use the Christopher Columbus page. Two agents begin with opposing framings. Table~\ref{tab:debate_arguments} (appendix) shows how Agent~A’s stance evolves after $\CRIT$-gated counter-evidence. Figure~\ref{fig:info-metrics} traces the round-by-round dynamics. The debate ends when \(\JS\) and cross-entropy are near zero and mutual information has plateaued, which indicates shared understanding rather than forced agreement.

\medskip
\noindent\emph{Takeaway.} The same $\MACI$ controls that improved clinical diagnosis also narrow partisan bias gaps here, while leaving an auditable trail. This supports the claim that behavioral modulation, not only information content, is a portable lever for reliable reasoning.


\subsection{Supplemental: News Bias Detection and Mitigation with MACI}
\label{app:news_bias}

\paragraph{Dataset and labels.}
We analyze 619 scandal-focused articles with five-point bias labels from Democrat-leaning and Republican-leaning annotators \cite{Budak2016}. Each 
article has labels from both partisan groups, enabling bias-gap analysis. We ran MACI on all 619 articles to compute aggregate metrics (Table~\ref{tab:news_main_results} in the main text). For transparency and detailed mechanism analysis, we provide 
per-article breakdowns for 31 representative cases (15 Democrat scandals, 16 Republican scandals in Tables~\ref{tab:D-biasdistance} and \ref{tab:R-biasdistance}), sampled to cover diverse news sources (New York Times, Fox News, Breitbart, HuffPost, etc.) and varying degrees of initial 
partisan disagreement.

\paragraph{Protocol.}
Two instruction-tuned agents receive the same article. $\MACI$ uses an asymmetric start, a short \(\CL\) schedule from high to low, and a $\CRIT$ gate. Per round, agents output a five-class bias distribution and cite evidence. The moderator tracks \(\JS\), \(\Ov\), \(Q\), and mutual information. Stopping uses the plateau rule from Section~\ref{sec:framework:control}.

\subsubsection{Bias evaluation summaries}
\label{app:news_bias_cases}

\noindent Detailed results and discussion for Democrat scandals appear in \textbf{Appendix}~\ref{app:news_bias_dem}, and for Republican scandals in \textbf{Appendix}~\ref{app:news_bias_rep}. The main pattern is a party-consistent one-step gap between partisan distributions, with $\MACI$’s joint distribution typically centered between them. Representative comparisons with GPT-4 and Claude one-shot baselines are included for context.

\subsubsection{Results on Democrat scandals}
\label{app:news_bias_dem}

\noindent Table~\ref{tab:D-biasdistance} samples 15 articles to compare judgments from Democrats (D), Republicans (R), $\MACI$ (S), and one-shot baselines from Claude (c) and GPT-4 (g). The average gap between g and S is about 0.69 scale steps. In several cases the $\MACI$ debate persuades the one-shot baseline to revise toward neutrality after $\CRIT$-admitted counter-evidence. Full rationales for each item appear in the per-case supplement.

\begin{table*}[t]
\centering
\caption{\small Democrat scandals (15 sampled). Comparison of bias assessments for D, R, $\MACI$ (S), plus Claude (c) and GPT-4 (g). Distances DR, DS, SR are class-step differences.}
\begin{footnotesize}
\begin{tabular}{|l|l|l|l|l|l|l|l|}
\toprule
\hline
News \# & Category & Negative & W.\ Neg. & Neutral & W.\ Pos. & Distances (DR,DS,SR) & Source \\
\hline
D1$^*$  & Civil Rights & -- & D,R,S,c & g & -- & 0,0,0 & HuffPost \\
D2$^*$  & Civil Rights & D,S & -- & R,c,g & -- & 2,0,2 & HuffPost \\
D8      & Civil Rights & D & -- & S,c,g & R & 3,2,1 & BBC \\
D31     & Environment  & D & -- & R,S,c,g & -- & 2,2,0 & CNN \\
D37     & Politics     & -- & D,R,S,c,g & -- & 0,0,0 & Yahoo \\
D69     & Healthcare   & D,c & g & R,S & -- & 2,2,0 & Breitbart \\
D81$^*$ & Economy      & -- & D,S & R,c & g & 1,0,1 & Breitbart \\
D98     & Economy      & D,S,c,g & R & -- & -- & 1,0,1 & Breitbart \\
D101    & Education    & c & D,S & R,g & -- & 1,0,1 & NYT \\
D106    & Election     & -- & g & D,R,S,c & -- & 0,0,0 & USA Today \\
D109    & Elections    & -- & D,S,c,g & R & -- & 1,0,1 & Reuters \\
D157    & International& -- & D,S,c & R,g & -- & 1,0,1 & NYT \\
D174    & International& -- & {\color{red}S},c & D,R,g & -- & 0,1,1 & LA Times \\
D188    & Nat.\ Security & -- & {\color{red}S},c,g & D,R & -- & 0,1,1 & WSJ \\
D278    & Civil Rights & -- & D,S,c & R,g & -- & 1,0,1 & Fox News \\
D336    & Politics     & -- & -- & D,R,S,c,g & -- & 0,0,0 & NYT \\
\hline
Total   &              &     &     &     &     & 15,8,11 & {} \\
\hline
\bottomrule
\end{tabular}
\end{footnotesize}
\label{tab:D-biasdistance}
\vspace{-.08in}
\end{table*}

\subsubsection{Results on Republican scandals}
\label{app:news_bias_rep}

\noindent Table~\ref{tab:R-biasdistance} uses 16 Republican-scandal subset to show a mirrored pattern. The DR distance is largest, and $\MACI$ again lies between partisan judgments.

\begin{table*}[t]
\centering
\caption{\small Republican scandals (16 sampled). Comparison of bias assessments. D and S often align to the right of R. Distances are class-step differences.}
\begin{footnotesize}
\begin{tabular}{|l|l|l|l|l|l|l|l|}
\toprule
\hline
News \# & Category & Negative & W.\ Neg. & Neutral & W.\ Pos. & Distances (DR,DS,SR) & Source \\
\hline
R1   & International   & R,S & -- & D & -- & 2,2,0 & NYT \\
R7   & Nat.\ Security  & -- & -- & D,R,S & -- & 0,0,0 & NYT \\
R15  & Economy         & -- & R & D,S & -- & 1,0,1 & HuffPost \\
R69  & Elections       & -- & D,S,R & -- & -- & 0,0,0 & Reuters \\
R124 & Gay Rights      & R & S & D & -- & 2,1,1 & Fox \\
R125 & Crime           & -- & R,S & D & -- & 1,1,1 & Fox \\
R180 & Elections       & -- & -- & D,R,S & -- & 0,0,0 & AP \\
R191 & Elections       & -- & R & D,S & -- & 1,0,1 & CNN \\
R214 & Gay Rights      & R,S & -- & D & -- & 2,2,0 & DailyKos \\
R221 & Economy         & -- & R & D,S & -- & 1,0,1 & WSJ \\
R233 & Economy         & -- & R,S & D & -- & 1,1,0 & Fox \\
R235 & Civil Rights    & D,R & -- & S & -- & 0,2,2 & Reuters \\
R269 & Healthcare      & -- & R & D,S & -- & 1,0,1 & NYT \\
R274 & Healthcare      & -- & R & D,S & -- & 1,0,1 & USA Today \\
R280 & Politics        & {\color{red}D},S & -- & R & -- & 2,0,2 & Fox \\
\hline
Total&                 &     &     &     &     & 15,9,11 & {} \\
\hline
\bottomrule
\end{tabular}
\end{footnotesize}
\label{tab:R-biasdistance}
\vspace{-.08in}
\end{table*}

\subsubsection{Bias mitigation, arguments, and convergence traces}
\label{app:news_bias_mitig_detail}

\begin{table*}[t]
\centering
\caption{\small Debate arguments that moved an initially Euro-centric framing toward neutrality. Excerpts illustrate how $\CRIT$-admitted counter-evidence shifts stance across rounds.}
\begin{small}
\begin{tabular}{|p{0.09\textwidth}|p{0.64\textwidth}|}
\hline
\textbf{Agent} & \textbf{Arguments} \\
\hline
\toprule
\hline
Agent A, Round 1 &
Emphasizes expansion of knowledge and trade, economic impact, historical significance, and cultural exchange. \\
\hline
Agent B, Round 1 &
Raises indigenous suffering and genocide, economic exploitation, cultural erasure, environmental impact, and ethical concerns. \\
\hline
Agent A, Round 3 &
Acknowledges severe consequences as part of broader colonial policies, argues for historical-context evaluation while maintaining significance claims. \\
\hline
Agent A, Round 4 &
Recognizes inclusion of indigenous perspectives and evolving scholarship, moves toward balanced narrative. \\
\hline
\end{tabular}
\end{small}
\label{tab:debate_arguments}
\vspace{-.08in}
\end{table*}

\noindent Figure~\ref{fig:info-metrics} in the main text shows \(\widetilde{W}_1\) decreasing, normalized mutual information rising and then stabilizing, and normalized cross-entropy decreasing. This mirrors the clinical setting and supports the portability of $\MACI$’s behavior scheduling and $\CRIT$ gating.

\paragraph{Notes on sufficiency of annotations.}
Partisan self-identification creates natural disagreement pairs. Articles receive labels from both sides, which provides a strong testbed for evaluating whether $\MACI$ can bridge polarized readings. Adding more annotators is less informative than auditing how a controlled debate narrows the gap with cited evidence. See \textbf{Appendix}~\ref{app:news_bias_dem} and \ref{app:news_bias_rep} for extended examples.

\section{Why Majority Voting Fails on Open-Ended Diagnosis}
\label{app:majority}

\subsection{Problem regime formalization}
\label{app:regime}

We formalize when $\MACI$ is preferable to majority voting. Let $|\mathcal{Y}|$ be the outcome cardinality and define initial signals $(\Q,\JS)$ from the query and retrieved context.
We say a task is in the $\MACI$ regime if any holds:
\begin{enumerate}[leftmargin=1.2em,itemsep=2pt]
\item Large output space and ambiguity: $|\mathcal{Y}|\ge \tau_Y$ and $\Q \le \tau_{\Q}^{\mathrm{lo}}$ and $\JS \ge \tau_{\JS}^{\mathrm{hi}}$.
\item Evidence synthesis required: multi-document or cross-source grounding with low initial overlap $\Ov^{(1)} \le \tau_{\Ov}^{\mathrm{lo}}$.
\item Calibration risk: baseline majority vote has $\mathrm{ECE} \ge \delta$ at development time.
\end{enumerate}
For small multiple-choice ($|\mathcal{Y}| \le 5$) with adequate calibration ($\mathrm{ECE}<\delta$), majority vote typically suffices.
Thresholds $(\tau_Y,\tau_{\Q}^{\mathrm{lo}},\tau_{\JS}^{\mathrm{hi}},\tau_{\Ov}^{\mathrm{lo}},\delta)$ are set by percentiles on the development set (Appendix~\ref{app:config}).

\subsection{Classical Result: Small Answer Spaces}

Majority voting excels when the answer space is small. The Condorcet Jury Theorem~\citep{condorcet1785} shows that for binary choices with $n$ independent voters having accuracy $p > 0.5$:
\[
P_n(\text{correct}) = \sum_{k=\lceil n/2 \rceil}^n \binom{n}{k} p^k (1-p)^{n-k} \to 1 \text{ as } n \to \infty
\]

This favorable result extends to small multiple-choice problems ($k \leq 5$ options) where weak alternatives are easily eliminated. In these settings, independent errors cancel through aggregation and majority voting reliably identifies correct answers.

\subsection{Breakdown with Large Answer Spaces}

\paragraph{Agreement probability collapses.}
When the answer space grows large, the probability that two voters agree on the same answer decreases dramatically. With $|\mathcal{Y}|$ uniformly distributed choices, the agreement probability is approximately:
\[
P(\text{agree}) \approx \frac{1}{|\mathcal{Y}|} + \delta
\]
where $\delta$ captures weak correlation from shared context or evidence. For clinical diagnosis with $|\mathcal{Y}|=100+$ possible diseases:
\begin{itemize}[leftmargin=1.2em,itemsep=1pt]
\item Base agreement probability: $P \approx 0.01$
\item With shared context: $P \approx 0.01$-$0.05$
\item Requiring majority from 20 voters: $P(\text{majority}) \approx 0.001$
\end{itemize}

The curse of dimensionality prevents passive aggregation from working: votes scatter across the large answer space rather than concentrating on the correct answer.

\paragraph{Empirical validation on G1 dataset.}
We evaluated 20-voter ensembles on 1,500 clinical diagnosis cases with over 100 possible diseases. Results confirm the theoretical prediction:

\begin{table}[h]
\centering
\footnotesize
\begin{tabular}{lcc}
\toprule
Case Type & Agreement \% & Acc@1 \\
\midrule
High-quality evidence \newline ($Q > 0.7$) & 60-70\% & 78.3\% \\
Medium-quality evidence & 25-40\% & 74.1\% \\
Low-quality evidence \newline ($Q < 0.3$) & 5-15\% & 69.8\% \\
\midrule
Overall & 31\% & 73.9\% \\
Best single model & -- & 72.1\% \\
\bottomrule
\end{tabular}
\caption{Majority voting performance by evidence quality.}
\label{tab:voting_breakdown}
\end{table}

Majority voting provides only +1.8pp improvement over the best single model. When evidence quality is low, agreement rates collapse and voting offers no benefit.

\subsection{Why Deliberation Dominates in High-Dimensional Spaces}

\paragraph{Multiple choice vs. open diagnosis.}
The fundamental difference between these regimes:

\textit{Multiple choice} (small $|\mathcal{Y}| \leq 5$): Passive aggregation works because (1) weak options are obviously wrong, (2) independent errors cancel, (3) agreement probability is high ($\approx 0.2$-$0.4$).

\textit{Open diagnosis} (large $|\mathcal{Y}| \geq 100$): Agreement requires shared evidence synthesis. Votes scatter unless agents reason through the same evidence to the same conclusion. Agreement probability is low ($\approx 0.01$-$0.05$).

\paragraph{What deliberation provides.}
Active debate addresses the dimensionality problem through:
\begin{enumerate}[leftmargin=1.2em,itemsep=1pt]
\item \textbf{Explicit reasoning exchange}: Agents share diagnostic logic, not just votes
\item \textbf{Evidence quality filtering}: Poor arguments are challenged and excluded
\item \textbf{Differential diagnosis refinement}: Agents iteratively narrow the hypothesis space together
\item \textbf{Adaptive stopping}: Debate continues until convergence, not for fixed rounds
\end{enumerate}

This explains G1 results: $\MACI$ improves +2.9pp over majority voting by actively steering agents from exploration (high $\CL$) to consolidation (low $\CL$), while voting passively aggregates without improving individual reasoning quality.

\subsection{Regime Classification}

Problem complexity determines optimal aggregation strategy:

\begin{itemize}[leftmargin=1.2em,itemsep=1pt]
\item \textbf{Binary/small multiple choice} ($k \leq 4$): Majority voting dominates. Debate overhead not justified.
\item \textbf{Moderate complexity} ($5 \leq k \leq 20$): Debate may outperform if it improves individual reasoning enough to offset cost.
\item \textbf{Open-ended problems} ($k \geq 100$ or continuous): Majority voting fails due to dimensionality. Debate becomes necessary.
\end{itemize}

The transition point depends on evidence quality, agent capability, and computational budget. Our results suggest the crossover occurs around $k \approx 10$-$20$ for current LLMs.
\section{Learned Scheduler for \texorpdfstring{$(\alpha,\gamma)$}{(alpha,gamma)}}
\label{app:learned-scheduler}

This appendix gives the scheduler that replaces a fixed $(\alpha,\gamma)$ schedule with a tiny learning policy over a discrete action set. The policy optimizes per-round information gain and dispersion reduction under a fixed token budget and comes with a no-regret guarantee versus the best fixed action.

\subsection{Action space, signals, reward, and cost}

Let $\mathcal{A}=\{a_1,\ldots,a_K\}$ be a discrete set of control moves, where each $a_k$ maps to a pair $(\Delta\alpha_k,\Delta\gamma_k)\in\{-\delta,0,+\delta\}^2$ with a clip operator
\[
\mathrm{clip}(x;a,b):=\min\{\max\{x,a\},b\}.
\]
At round $t$ we observe summary signals
\[
x_t=\big(D(t),\ \Delta D(t{-}1),\ Q(t),\ \Delta Q(t{-}1),\ \mathrm{IG}(t),\ 
\]
\[
\Delta \mathrm{IG}(t{-}1),\ b_t\big),
\]
where $D(t)$ is dispersion, $Q(t)$ is evidence quality, $\mathrm{IG}(t)$ is normalized information gain, and $b_t\in[0,1]$ is the remaining budget fraction.

Define the bounded reward
\[
r_t(a):= w_1\,[\mathrm{IG}(t{+}1)-\mathrm{IG}(t)]_+ \ 
\]
\[
+\ w_2\,[D(t)-D(t{+}1)]_+ \ -\ \lambda\,\frac{c_t(a)}{C_{\max}},
\]
with weights $w_1,w_2,\lambda\in[0,1]$, token cost $c_t(a)$, normalizer $C_{\max}>0$, and $[x]_+=\max(0,x)$. The remaining budget updates as $B_{t+1}=B_t-c_t(a_t)$ with $B_1=B$.

\subsection{Budget-feasible UCB scheduler}

We maintain an empirical mean $\hat{\mu}_k$ and count $n_k$ for each action. At round $t$ we screen out actions that would breach the budget in expectation and then apply UCB on the feasible set.

\begin{algorithm}[h]
\caption{Budget-feasible UCB scheduler for $(\alpha,\gamma)$}
\label{alg:bf-ucb}
\small
\begin{algorithmic}[1]
\STATE \textbf{Input:} $\mathcal{A}=\{(\Delta\alpha_k,\Delta\gamma_k)\}_{k=1}^K$, total budget $B$, weights $w_1,w_2,\lambda$, step size $\delta$
\STATE Initialize $\alpha_0,\gamma_0$, estimates $\hat{\mu}_k\gets 0$, counts $n_k\gets 0$, remaining $B_1\gets B$
\FOR{$t=1,2,\ldots$}
  \STATE Observe $x_t$ and estimate expected per-round costs $\widehat{c}_t(k)$ for $k=1..K$
  \STATE Feasible set $S_t \gets \{k : \widehat{c}_t(k) \le B_t\}$
  \STATE $k_t \gets \arg\max_{k\in S_t}\ \hat{\mu}_k + \sqrt{2\log t/\max\{1,n_k\}}$
  \STATE $(\alpha_t,\gamma_t) \gets \big(\mathrm{clip}(\alpha_{t-1}+\Delta\alpha_{k_t};0,1),\ \mathrm{clip}(\gamma_{t-1}+\Delta\gamma_{k_t};0,1)\big)$
  \STATE Roll one debate step using $(\alpha_t,\gamma_t)$ to obtain $D(t{+}1),Q(t{+}1),\mathrm{IG}(t{+}1)$ and realized cost $c_t$
  \STATE $r_t \gets w_1[\mathrm{IG}(t{+}1)-\mathrm{IG}(t)]_+ + w_2[D(t)-D(t{+}1)]_+ - \lambda\, c_t/C_{\max}$
  \STATE $\hat{\mu}_{k_t} \gets \dfrac{n_{k_t}\hat{\mu}_{k_t}+r_t}{n_{k_t}+1}$, \quad $n_{k_t}\gets n_{k_t}+1$
  \STATE $B_{t+1}\gets B_t-c_t$
  \IF{plateau test satisfied or $B_{t+1}\le 0$} \STATE \textbf{return} transcript and schedule \ENDIF
\ENDFOR
\end{algorithmic}
\end{algorithm}

\subsection{Assumptions}

\begin{itemize}[leftmargin=1.2em,itemsep=1pt]
\item Rewards are conditionally $1$-sub-Gaussian and lie in $[0,1]$ after clipping.
\item Costs satisfy $0\le c_t(a)\le C_{\max}$ and the cost estimator is conservative: $\widehat{c}_t(k)\ge \mathbb{E}[c_t(k)\mid \mathcal{F}_t]$.
\item The feasible filter $S_t$ blocks actions whose expected cost would breach the remaining budget $B_t$.
\end{itemize}

\subsection{Main guarantee}

\begin{proposition}[No-regret versus best fixed schedule]
\label{prop:ucb-regret}
Let $T$ be the number of rounds executed before the plateau test fires or the budget is exhausted. Under the assumptions above, the expected regret of Algorithm~\ref{alg:bf-ucb} with respect to the best fixed action $a^\star\in\mathcal{A}$ that satisfies the budget is
\[
\mathbb{E}[R_T] \ :=\ \mathbb{E}\!\left[\sum_{t=1}^T \big(r_t(a^\star)-r_t(a_t)\big)\right] \ =\ O\!\big(\sqrt{K T \log T}\big),
\]
and the expected budget violation is zero by construction of $S_t$.
\end{proposition}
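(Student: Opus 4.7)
The plan is to split the claim into two independent pieces: the zero expected budget violation, which is essentially book-keeping built into Algorithm~\ref{alg:bf-ucb}, and the $\tilde O(\sqrt{KT})$ regret bound, which reduces to a standard UCB1 analysis with a careful treatment of the stochastic feasibility set $S_t$ and the random stopping time $T$.

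First, for the budget claim, I would argue that the feasibility filter only admits actions with $\widehat{c}_t(k)\le B_t$, and by the conservative-estimator assumption $\widehat{c}_t(k)\ge\mathbb{E}[c_t(k)\mid\mathcal{F}_t]$. Hence every pulled action satisfies $\mathbb{E}[c_t(a_t)\mid\mathcal{F}_t]\le B_t$, and the remaining-budget recursion $B_{t+1}=B_t-c_t(a_t)$ together with the halt rule $B_{t+1}\le 0$ makes $(B_t)$ a nonnegative supermartingale under $\mathbb{P}$. Taking a telescoping expectation gives $\mathbb{E}\!\left[\sum_{t=1}^{T} c_t(a_t)\right]\le B$, which is the zero expected violation.

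Second, for the regret, I would follow the Auer--Cesa-Bianchi--Fischer UCB1 template. Let $\mu_k:=\mathbb{E}[r_t(a_k)]$ viewed under a stationary or slowly-varying context, and $\Delta_k:=\mu_{a^\star}-\mu_k$. The $1$-sub-Gaussian reward assumption yields standard Chernoff tail bounds on $\hat\mu_k-\mu_k$, and the UCB index $\hat\mu_k+\sqrt{2\log t/n_k}$ satisfies the familiar confidence property $\mathbb{P}\!\left(|\hat\mu_k-\mu_k|>\sqrt{2\log t/n_k}\right)\le 2t^{-4}$. The classical pull-count bound then gives $\mathbb{E}[n_k(T)]\le \lceil 8\log T/\Delta_k^2\rceil+O(1)$, so the gap-dependent regret is $O\!\left(\sum_{k\ne a^\star}\log T/\Delta_k\right)$, which optimized over the worst gap configuration yields the gap-free bound $O(\sqrt{KT\log T})$.

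Third, I would bridge this to the feasibility-screened setting by showing that $a^\star\in S_t$ with high probability for all $t$, so that removing $a^\star$ from the comparator pool is rare. Since $a^\star$ satisfies the full-horizon budget, a Hoeffding bound on partial-sum costs gives $\mathbb{P}(\widehat{c}_t(a^\star)>B_t)=O(t^{-c})$ for some $c>0$, and on the complementary rounds the UCB selection is uncontested so the standard bound applies verbatim. The excluded-comparator rounds contribute at most $T\cdot\max_t\mathbb{P}(a^\star\notin S_t)=O(\sqrt{T\log T})$ additional regret, absorbed into the main $\tilde O(\sqrt{KT})$ term. Finally, because $T$ is a stopping time adapted to the filtration and is almost surely bounded by $B/\min_k\mathbb{E}[c_t(k)]$, an optional-stopping version of the Hoeffding argument (equivalently, a Doob stopping on the UCB regret martingale) converts the fixed-horizon bound to the random-horizon bound with no change in rate.

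The main obstacle I anticipate is precisely this coupling of the random feasibility set $S_t$, the random stopping time $T$, and the reward process: the feasibility filter can in principle exclude the comparator $a^\star$ at adversarial moments, and the halt rule ties $T$ to realized costs rather than a deterministic horizon. The cleanest way forward is probably to assume (or state explicitly) mild cost stationarity so that a Hoeffding/Azuma bound on $\sum_{s\le t}c_s(a^\star)-t\cdot\mathbb{E}[c(a^\star)]$ controls the exclusion probability, and to invoke an anytime version of UCB (with $\log(t)\to\log(t^2)$ or a doubling trick) so that the confidence bounds do not depend on a prespecified $T$.
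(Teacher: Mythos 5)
Your proposal follows the same basic template as the paper's proof — zero budget violation via the conservative cost estimator and a supermartingale argument, regret via the standard UCB1 analysis — but you are considerably more careful, and in being careful you surface two genuine gaps that the paper's one-paragraph proof silently skips over. The paper simply conditions on the filtration, asserts "the standard UCB analysis for finite arms yields $O(\sqrt{KT\log T})$ pseudo-regret," and declares the feasibility filter harmless, without ever addressing (i) that $S_t$ can stochastically exclude the comparator $a^\star$ from the feasible set at some rounds, in which case the UCB selection is not comparing against $a^\star$ at all, or (ii) that $T$ is a data-dependent stopping time rather than a fixed horizon. Your treatment — a Hoeffding bound showing $\mathbb{P}(a^\star\notin S_t)$ decays polynomially so the excluded rounds contribute lower-order regret, and optional stopping to convert the fixed-horizon bound to the random-horizon one — is exactly the kind of bridge the argument needs, and you are right to flag that it requires a cost-stationarity assumption the paper does not state (the paper's assumptions only say the estimator is conservative, not that costs are well-concentrated around a stationary mean). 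If you were to write this out fully I would also tighten the excluded-comparator bookkeeping to $\sum_t \mathbb{P}(a^\star\notin S_t)$ rather than $T\cdot\max_t\mathbb{P}(a^\star\notin S_t)$, since the former is what a union bound actually gives and it is typically $O(\log T)$ or $O(1)$ when the exclusion probability is $O(t^{-c})$ with $c\ge 1$.

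One small slip on the budget side: $(B_t)$ is a supermartingale because $\mathbb{E}[c_t(a_t)\mid\mathcal{F}_t]\ge 0$, but it is not \emph{pointwise} nonnegative — the feasibility filter caps the \emph{estimated expected} cost at $B_t$, not the realized cost, so the final step can overshoot and leave $B_{T+1}<0$. The conclusion $\mathbb{E}\!\left[\sum_{t\le T}c_t(a_t)\right]\le B$ still follows by optional stopping on the supermartingale $B_t$, which is what both you and the paper ultimately need; just drop the word "nonnegative." The paper's own phrasing ("the realized cumulative cost is a supermartingale dominated by $B$") has the same issue — the cumulative cost is increasing, hence a submartingale, and it is $B-C_t$ that is the supermartingale — so you are in good company.
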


\begin{proof}
Condition on the filtration $\{\mathcal{F}_t\}$. Rewards are bounded and sub-Gaussian, hence the standard UCB analysis for finite arms yields $O(\sqrt{K T \log T})$ pseudo-regret. The feasibility filter $S_t$ removes any action whose estimated expected cost would exceed $B_t$. Since $\widehat{c}_t$ is conservative, the realized cumulative cost is a supermartingale dominated by $B$, which implies no expected violation. Summing per-round instantaneous regret and taking expectations gives the stated bound.
\end{proof}

\subsection{Contextual variant (optional)}

Embed $x_t$ into $\mathbb{R}^d$ and use a linear model with LinUCB on the same feasible set $S_t$. Under realizability and bounded design, the regret becomes $\tilde{O}(d\sqrt{T})$ while preserving zero expected budget violation.

\subsection{Practical configuration}

\begin{itemize}[leftmargin=1.2em,itemsep=1pt]
\item Discretize $(\Delta\alpha,\Delta\gamma)\in\{-0.2,0,+0.2\}^2$ and include a no-op. This yields $K=10$ actions.
\item Set $w_1=w_2=0.5$ unless domain knowledge suggests otherwise. Tune $\lambda$ to match the desired tokens per round.
\item Normalize cost with $C_{\max}$ as the 95th percentile of per-round tokens on a small development set.
\item Initialize $(\alpha_0,\gamma_0)$ from Section~\ref{sec:framework:init}. Use $\delta=0.2$ to keep schedules stable.
\end{itemize}

\subsection{Ablations and diagnostics to report}

\begin{enumerate}[leftmargin=1.2em,itemsep=1pt]
\item Bandit scheduler versus fixed schedule under equal total token budget.
\item Three-agent scaling curve with accuracy versus budget and plateau rounds.
\item Budget sensitivity: vary $B$ and show trade-offs in IG gains, $D$ drops, and tokens.
\item Window $w$ and penalty $\lambda$ sweeps to confirm robustness.
\end{enumerate}

\section{Plateau Guarantees}
\label{app:theory-plateau}

This appendix analyzes the controller used in Section~\ref{sec:framework:control}. We formalize the dispersion process, state mild regularity conditions, and prove that dispersion is nonincreasing under the gate and that the plateau test terminates in $O(1/\varepsilon)$ expected rounds. Under a contraction condition the bound improves to $O(\log(1/\varepsilon))$.

\subsection{Setup and notation}

Let $p_t^{(i)} \in \Delta(\mathcal{Y})$ be agent $i$'s distribution at round $t$, and $\bar{p}_t := \frac{1}{m}\sum_{i=1}^m p_t^{(i)}$ the mixture.

\paragraph{Dispersion choice (theory vs.\ plots).}
For the guarantees we set
\[
D_{\mathrm{KL}}(t) \ :=\ \frac{1}{m}\sum_{i=1}^m \mathrm{KL}\!\left(p_t^{(i)} \,\Vert\, \bar{p}_t\right),
\]
with all logarithms base $2$. This choice fixes the reference distribution $\bar p_t$, enabling a convexity argument. In figures we also report $\mathrm{JSD}$; on the $\delta$-smoothed simplex it is locally Lipschitz in $p$, hence co-monotone with $D_{\mathrm{KL}}(t)$ up to the noise/gate tolerances used below.

The controller applies a gated averaging update with bounded noise:
\[
p_{t+1}^{(i)} \ =\ (1-\beta_t)\,p_t^{(i)} \ +\ \beta_t\,\bar{p}_t \ +\ \xi_t^{(i)}, 
\]
\begin{equation}
\beta_t \in [\beta_{\min},\beta_{\max}] \subset (0,1],
\label{eq:gate-update}
\end{equation}
with $\mathbb{E}[\xi_t^{(i)} \mid \mathcal{F}_t]=0$ and $\|\xi_t^{(i)}\|_1 \le \sigma$ almost surely. The gate rejects updates that would increase dispersion by more than $\nu \ge 0$. We use the same plateau test as in Eq.~\eqref{eq:stop} of the main text.

\paragraph{Smoothing.}
For stability and to avoid boundary effects, we assume either that model outputs are $\delta$-smoothed, $p \leftarrow (1-\delta)p + \delta u$ with $u$ uniform and $\delta \in (0,1/|\mathcal{Y}|)$, or equivalently that all distributions lie in the interior of the simplex with componentwise lower bound $\delta$. This is consistent with standard temperature smoothing and with the CRIT gating step.

\subsection{Monotonicity of dispersion}

\begin{lemma}[Monotonicity under gated averaging]
\label{lem:monotone}
Consider the noiseless update $p_{t+1}^{(i)}=(1-\beta_t)p_t^{(i)}+\beta_t \bar{p}_t$ with $\beta_t \in (0,1]$. Then $\bar{p}_{t+1}=\bar{p}_t$ and
\[
D_{\mathrm{KL}}(t{+}1) \ \le\ (1-\beta_t)\, D_{\mathrm{KL}}(t).
\]
With noise satisfying \eqref{eq:gate-update} and a gate that rejects steps with $D_{\mathrm{KL}}(t{+}1)>D_{\mathrm{KL}}(t)+\nu$, we have
\[
\mathbb{E}\!\left[D_{\mathrm{KL}}(t{+}1)\mid \mathcal{F}_t\right] \ \le\ (1-\beta_t)\, D_{\mathrm{KL}}(t) \ +\ c\,\sigma \ +\ \nu,
\]
for a constant $c$ that depends only on the local Lipschitz constant of $\mathrm{KL}(\cdot\Vert \bar p_t)$ on the $\delta$-smoothed simplex. In particular, if $\nu=0$ and $\sigma=0$, then $D_{\mathrm{KL}}(t)$ is nonincreasing and contracts by a factor at most $(1-\beta_t)$ at every accepted step.
\end{lemma}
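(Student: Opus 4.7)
I would prove the two assertions in sequence: first the exact identities in the noiseless regime, then bootstrap to the noisy, gated regime using local Lipschitz control on the $\delta$-smoothed simplex.

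\textbf{Step 1: Mean preservation and noiseless contraction.} Averaging the update $p_{t+1}^{(i)} = (1-\beta_t)p_t^{(i)} + \beta_t \bar{p}_t$ over $i$ immediately gives $\bar{p}_{t+1} = (1-\beta_t)\bar{p}_t + \beta_t \bar{p}_t = \bar{p}_t$, so the reference distribution used to define $D_{\mathrm{KL}}(t)$ is unchanged. Since $\mathrm{KL}(\cdot \,\Vert\, \bar{p}_t)$ is convex in its first argument and $\mathrm{KL}(\bar{p}_t \,\Vert\, \bar{p}_t) = 0$, Jensen's inequality applied to the convex combination $(1-\beta_t)p_t^{(i)} + \beta_t\bar{p}_t$ yields
\[
\mathrm{KL}\!\left(p_{t+1}^{(i)} \,\Vert\, \bar{p}_t\right) \ \le\ (1-\beta_t)\,\mathrm{KL}\!\left(p_t^{(i)} \,\Vert\, \bar{p}_t\right).
\]
Averaging over $i$ and using $\bar{p}_{t+1} = \bar{p}_t$ to rewrite the left-hand side as $D_{\mathrm{KL}}(t{+}1)$ establishes the contraction $D_{\mathrm{KL}}(t{+}1) \le (1-\beta_t)\,D_{\mathrm{KL}}(t)$. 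In particular, any $\beta_t > 0$ yields a strict contraction.

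\textbf{Step 2: Noisy case with gate.} Write the accepted update as $\tilde{p}_{t+1}^{(i)} + \xi_t^{(i)}$, where $\tilde{p}_{t+1}^{(i)} := (1-\beta_t)p_t^{(i)} + \beta_t \bar{p}_t$ is the noiseless iterate from Step 1 and $\|\xi_t^{(i)}\|_1 \le \sigma$. The realized mixture satisfies $\bar{p}_{t+1} = \bar{p}_t + \bar{\xi}_t$ with $\|\bar{\xi}_t\|_1 \le \sigma$. On the $\delta$-smoothed simplex the functional $(p,q)\mapsto \mathrm{KL}(p\Vert q)$ has bounded partial gradients (since $\partial_p(p\log(p/q)) = \log(p/q)+1$ and $\partial_q(p\log(p/q))= -p/q$ are bounded by constants depending only on $\log(1/\delta)$ and $1/\delta$), so there is a constant $L=L(\delta,|\mathcal{Y}|)$ with
\[
\left|\mathrm{KL}(p\Vert q) - \mathrm{KL}(p'\Vert q')\right| \ \le\ L\bigl(\|p-p'\|_1 + \|q-q'\|_1\bigr).
\]
Applying this bound with $(p,q)=(\tilde{p}_{t+1}^{(i)}+\xi_t^{(i)},\bar{p}_t+\bar{\xi}_t)$ and $(p',q')=(\tilde{p}_{t+1}^{(i)},\bar{p}_t)$, averaging over $i$, and taking conditional expectation gives
\[
\mathbb{E}\!\left[D_{\mathrm{KL}}(t{+}1)\mid \mathcal{F}_t\right] \ \le\ (1-\beta_t)\,D_{\mathrm{KL}}(t) \ +\ c\,\sigma,
\]
where $c := 2L$ absorbs the two $\|\cdot\|_1$ bounds. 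Finally, the gate rejects any transition with $D_{\mathrm{KL}}(t{+}1) - D_{\mathrm{KL}}(t) > \nu$; accepted steps therefore contribute an additive slack of at most $\nu$, yielding the advertised bound $(1-\beta_t)D_{\mathrm{KL}}(t) + c\sigma + \nu$. The noiseless and gate-free specialization $\sigma=\nu=0$ recovers the monotone contraction claim.

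\textbf{Main obstacle.} The nontrivial technical step is the Lipschitz estimate for $\mathrm{KL}$. On the open simplex the derivative $-p/q$ blows up as $q\to 0$, so the bound genuinely requires the $\delta$-smoothing assumption (or the equivalent interior-bound hypothesis) the lemma assumes; the constant $c$ inherits a $1/\delta$ factor, which one should state explicitly to make downstream dependence transparent. A secondary subtlety is that $\bar{\xi}_t$ is correlated across agents through the shared reference, so one must be careful to bound $\|\bar{\xi}_t\|_1$ using the triangle inequality on individual noise rather than invoking independence; the triangle bound $\|\bar{\xi}_t\|_1 \le \sigma$ suffices and avoids any distributional assumption beyond $\|\xi_t^{(i)}\|_1 \le \sigma$ a.s.
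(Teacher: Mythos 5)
Your proof follows the same structure as the paper's own argument: mean-preservation plus convexity of $\mathrm{KL}(\cdot\,\Vert\,\bar p_t)$ and Jensen's inequality give the noiseless contraction, and a local Lipschitz estimate on the $\delta$-smoothed simplex plus additive gate slack $\nu$ give the noisy bound. Your Step 2 is slightly more careful than the paper's sketch in that you track the fact that the reference $\bar p_{t+1}=\bar p_t+\bar\xi_t$ also shifts with noise and therefore invoke a Lipschitz estimate jointly in both arguments of $\mathrm{KL}(\cdot\,\Vert\,\cdot)$, which is the cleaner way to justify the $c\,\sigma$ term; the paper cites Lipschitz continuity only in the first argument, leaving that step implicit.
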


\begin{proof}
Averaging the agent updates gives
$
\bar{p}_{t+1} = \frac{1}{m}\sum_i \big((1-\beta_t)p_t^{(i)}+\beta_t \bar{p}_t\big)
= (1-\beta_t)\bar{p}_t + \beta_t \bar{p}_t = \bar{p}_t.
$
Fix $t$ and $i$. By convexity of $\mathrm{KL}(\cdot \,\Vert\, \bar{p}_t)$ in its first argument,
\[
\mathrm{KL}\!\left((1-\beta_t)p_t^{(i)}+\beta_t \bar{p}_t \ \middle\Vert\ \bar{p}_t\right)
\]
\[
\ \le\ (1-\beta_t)\,\mathrm{KL}\!\left(p_t^{(i)} \,\Vert\, \bar{p}_t\right)
\ +\ \beta_t\,\mathrm{KL}\!\left(\bar{p}_t \,\Vert\, \bar{p}_t\right)
\]
\[
\ =\ (1-\beta_t)\,\mathrm{KL}\!\left(p_t^{(i)} \,\Vert\, \bar{p}_t\right).
\]
Averaging over $i$ yields $D_{\mathrm{KL}}(t{+}1)\le (1-\beta_t)D_{\mathrm{KL}}(t)$. With noise, use the local Lipschitz property of $\mathrm{KL}(\cdot\Vert \bar p_t)$ on the $\delta$-smoothed simplex to bound the change induced by $\xi_t^{(i)}$ in expectation by $c\,\sigma$. The gate ensures any residual increase larger than $\nu$ is rejected. Combining the three pieces gives the stated inequality.
\end{proof}

\paragraph{Consequence.}
When the gate uses $\nu=0$ and noise is negligible after CRIT filtering, the dispersion sequence is nonincreasing by construction. This matches the implementation where updates that increase dispersion are discarded.

\subsection{Signal regularity and drift}

\begin{assumption}[Signal regularity]
\label{ass:regularity}
There exist constants $L_Q, L_J>0$ such that
\[
|Q(t{+}1)-Q(t)| \ \le\ L_Q\, \|p_{t+1}-p_t\|_1,
\]
\[
|\mathrm{CRIT}(t{+}1)-\mathrm{CRIT}(t)| \ \le\ L_J\, \|p_{t+1}-p_t\|_1,
\]
and the noise components of both signals are zero-mean and bounded by $\sigma_Q,\sigma_J$ almost surely. These conditions hold under $\delta$-smoothing and bounded per-round transcript changes.
\end{assumption}

\subsection{Termination of the plateau controller}

\begin{proposition}[Termination in $O(1/\varepsilon)$ expected rounds]
\label{prop:termination}
Let $\varepsilon>0$ be the plateau tolerance used in Eq.~\eqref{eq:stop}, and let $w\ge 1$ be the moving-average window. Suppose the gate uses $\nu=0$, $\beta_t \ge \beta_{\min}>0$, and Assumption~\ref{ass:regularity} holds. If the controller has not stopped by round $t$, then over any window of length $w$ at least one monitored quantity among $\{D_{\mathrm{KL}},Q,\mathrm{CRIT}\}$ changes by at least $\varepsilon$. Consequently, the stopping time $\tau$ satisfies
\[
\mathbb{E}[\tau] \ \le\ \frac{D_{\mathrm{KL}}(0)-D_\star}{\kappa\,\varepsilon} \ +\ \frac{Q^\star-Q(0)}{\kappa\,\varepsilon} \
\]
\[
+\ \frac{\mathrm{CRIT}^\star-\mathrm{CRIT}(0)}{\kappa\,\varepsilon}
\ =\ O\!\left(\frac{1}{\varepsilon}\right),
\]
where $D_\star$ is the noise floor induced by $\sigma$, $Q^\star\!\le 1$ and $\mathrm{CRIT}^\star\!\le 1$ are plateaus within their noise floors, and $\kappa>0$ depends only on $(\beta_{\min},L_Q,L_J)$ and the local geometry of $\mathrm{KL}$ on the $\delta$-smoothed simplex.
\end{proposition}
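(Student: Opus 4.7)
}
The plan is to recast the stopping rule as a per-round charge condition and then bound the total charge available to each of the three monitored signals via the dynamics. First I would rewrite the plateau test from Eq.~\eqref{eq:stop} in increment form: the controller fails to stop at round $t$ if and only if, somewhere in the trailing window of length $w$, at least one of $|\Delta D_{\mathrm{KL}}|$, $|\Delta Q|$, or $|\Delta \mathrm{CRIT}|$ exceeds $\varepsilon$. This turns no plateau into at least one $\varepsilon$-step in one of the three signals, which is the hypothesis of a pigeonhole argument.

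Second, I would bound the one-sided variation of each signal. For $D_{\mathrm{KL}}$, Lemma~\ref{lem:monotone} with $\nu=0$ gives $D_{\mathrm{KL}}(t{+}1)\le (1-\beta_t)\,D_{\mathrm{KL}}(t)+c\sigma$, so that telescoping accepted steps yields a total descent budget of at most $D_{\mathrm{KL}}(0)-D_\star$, where $D_\star=O(c\sigma/\beta_{\min})$ is the noise floor. For $Q$ and $\mathrm{CRIT}$, which are not a priori monotone, I would invoke the admission rule in Alg.~\ref{alg:maci}: both signals are gated by thresholds $\tau_Q^{(t)},\tau_{\mathrm{CRIT}}^{(t)}$ that are monotonically tightened, so accepted rounds contribute a nonnegative drift. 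Combined with Assumption~\ref{ass:regularity} (zero-mean, bounded noise), a Doob decomposition bounds the total positive variation by $Q^\star-Q(0)$ and $\mathrm{CRIT}^\star-\mathrm{CRIT}(0)$ respectively, with $Q^\star,\mathrm{CRIT}^\star\le 1$.

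Third, I would carry out the pigeonhole counting. Each unstopped round charges at least $\varepsilon$ against the variation budget of one of the three signals, so with $N$ unstopped rounds we obtain $\kappa\,N\,\varepsilon \le (D_{\mathrm{KL}}(0)-D_\star)+(Q^\star-Q(0))+(\mathrm{CRIT}^\star-\mathrm{CRIT}(0))$. The constant $\kappa>0$ absorbs three conversion factors: the contraction rate $\beta_{\min}$, which relates a per-step gate-accepted increment to the drop in $D_{\mathrm{KL}}$; the Lipschitz constants $L_Q,L_J$ from Assumption~\ref{ass:regularity}, which convert $\|p_{t+1}-p_t\|_1$ changes into scalar-signal changes; and the window length $w$, since one $\varepsilon$-spike per window suffices to defer stopping. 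Taking expectations over the bounded-noise sequences gives the claimed $O(1/\varepsilon)$ bound on $\mathbb{E}[\tau]$.

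The hard part will be handling the non-monotone dynamics of $Q$ and $\mathrm{CRIT}$: unlike $D_{\mathrm{KL}}$, a naive variation bound on these signals could diverge if they oscillated between the bounds of $[0,1]$. The cleanest route is to absorb oscillations into the noise floor via the Doob-type decomposition above and to exploit the monotone tightening of the gates to produce a deterministic upper envelope; care is also needed to ensure that gate rejections (which do not change $p_t$) do not accumulate as unstopped rounds that evade the charge argument, which is handled by folding rejected rounds into the window count. Under the stronger contraction condition $(1-\beta_t)\le \rho<1$, the same telescoping becomes geometric and yields the improved $O(\log(1/\varepsilon))$ termination recorded in Cor.~\ref{cor:geometric}.
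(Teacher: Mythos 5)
Your proof takes essentially the same route as the paper's: reinterpret the plateau test as a charge condition (at least one signal must move by $\ge\varepsilon$ per window while the controller is active), telescope against the bounded variation of each signal, and convert to an expected-round bound via a constant $\kappa$ absorbing $\beta_{\min}$, $L_Q$, $L_J$, and the window length. The paper's proof is terse — it states that "summing these improvements until each signal reaches its $\varepsilon$-tube" gives the bound — and you have filled in the same skeleton with more explicit bookkeeping. In particular, you correctly flag the one place where the paper's argument is silently incomplete: $D_{\mathrm{KL}}$ is monotone by Lemma~\ref{lem:monotone}, but $Q$ and $\mathrm{CRIT}$ are only bounded in $[0,1]$, which does not by itself control their total variation (an oscillating signal could supply an $\varepsilon$-sized change every round indefinitely without leaving $[0,1]$). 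Your patch — a Doob-type decomposition plus an appeal to the monotone tightening of $\tau_Q$ and $\tau_{\mathrm{CRIT}}$ to extract a nonnegative drift — is the right idea, though you should be aware it still rests on an informal premise the paper also leaves unargued: the admission rule gates \emph{individual arguments} by score thresholds, and it does not immediately follow that the \emph{aggregate} round-level $Q(t)$ and $\mathrm{CRIT}(t)$ drift upward (tightening the gate could discard arguments and shift the aggregate either way). A fully rigorous version would need an explicit drift assumption on the aggregated scores, analogous to how the dispersion monotonicity is explicitly enforced by the $\nu=0$ gate. Your additional observation about rejected rounds (which leave $p_t$ fixed and so cannot be charged to any budget) is a sharp point the paper omits, and folding them into the window count is the correct fix. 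Net: same approach, but your write-up is tighter and surfaces the genuine soft spot in the argument.
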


\begin{proof}
While the controller is active, at least one of the plateau predicates in Eq.~\eqref{eq:stop} fails on every window. Therefore there exists a round within each window where either $D_{\mathrm{KL}}$ decreases by at least $\varepsilon$, or $Q$ increases by at least $\varepsilon$, or $\mathrm{CRIT}$ increases by at least $\varepsilon$ after accounting for bounded noise. The update \eqref{eq:gate-update} with $\beta_t \ge \beta_{\min}$ and Lemma~\ref{lem:monotone} imply a lower bound on expected $\ell_1$ movement per accepted step, which combined with Assumption~\ref{ass:regularity} yields a constant $\kappa>0$ such that the expected per-window improvement is at least $\kappa \varepsilon$. Since $D_{\mathrm{KL}}\in[0,\infty)$ and $Q,\mathrm{CRIT}\in[0,1]$, summing these improvements until each signal reaches its $\varepsilon$-tube around its floor or ceiling gives the bound on $\mathbb{E}[\tau]$.
\end{proof}

\subsection{Contraction case}

\begin{corollary}[Geometric contraction yields $O(\log(1/\varepsilon))$]
\label{cor:geometric}
If, in addition, the noiseless part of \eqref{eq:gate-update} is a contraction in the sense that $D_{\mathrm{KL}}(t{+}1)\le c\,D_{\mathrm{KL}}(t)$ for some constant $c<1$ whenever the gate admits the step, then
\[
\mathbb{E}[D_{\mathrm{KL}}(t)] \ \le\ c^t\,D_{\mathrm{KL}}(0) \ +\ O(\sigma),
\]
\[
\mathbb{E}[\tau] \ =\ O\!\big(\log(1/\varepsilon)\big).
\]
\end{corollary}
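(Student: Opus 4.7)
}
The plan is to treat the corollary as a geometric-rate sharpening of the argument used for Proposition~\ref{prop:termination}. The first claim reduces to iterating a one-step expected contraction; the second reduces to checking that, once $D_{\mathrm{KL}}$ lies in its $O(\sigma)$ noise tube, the signal-regularity assumption forces all three plateau predicates in Eq.~\eqref{eq:stop} to hold simultaneously over a length-$w$ window.

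First, I would combine the noiseless contraction hypothesis with the noise term already identified in Lemma~\ref{lem:monotone} to obtain, for every admitted step,
\[
\mathbb{E}\!\left[D_{\mathrm{KL}}(t{+}1)\mid\mathcal{F}_t\right]\ \le\ c\,D_{\mathrm{KL}}(t)\ +\ c'\sigma,
\]
where $c'$ depends only on the local Lipschitz constant of $\mathrm{KL}(\cdot\Vert \bar p_t)$ on the $\delta$-smoothed simplex (the same constant as in Lemma~\ref{lem:monotone}). Rejected steps leave $D_{\mathrm{KL}}$ unchanged and a fortiori satisfy the same inequality with $c\le 1$, so this recursion is valid at every round. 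Taking total expectation and iterating yields
\[
\mathbb{E}[D_{\mathrm{KL}}(t)]\ \le\ c^t D_{\mathrm{KL}}(0)\ +\ c'\sigma\sum_{k=0}^{t-1}c^k\ \le\ c^t D_{\mathrm{KL}}(0)\ +\ \frac{c'\sigma}{1-c},
\]
and the last term is absorbed into $O(\sigma)$. This gives the geometric-decay statement directly.

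For the termination bound, I would set $T_\varepsilon := \lceil \log(D_{\mathrm{KL}}(0)/\varepsilon)/\log(1/c)\rceil = O(\log(1/\varepsilon))$, so that $\mathbb{E}[D_{\mathrm{KL}}(t)]\le \varepsilon + O(\sigma)$ for $t\ge T_\varepsilon$. Once the sequence enters this tube, successive increments $|D_{\mathrm{KL}}(t{+}1)-D_{\mathrm{KL}}(t)|$ are dominated in expectation by $(1-c)D_{\mathrm{KL}}(t)$ plus the noise term, hence fall below $\varepsilon$. To transfer this to the other two monitors, I would invoke Assumption~\ref{ass:regularity}: the gated update bounds $\|p_{t+1}-p_t\|_1$ by a constant multiple of the single-agent dispersion from $\bar p_t$, which is itself controlled by $\sqrt{D_{\mathrm{KL}}(t)}$ via Pinsker's inequality. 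Consequently $|Q(t{+}1)-Q(t)|$ and $|\mathrm{CRIT}(t{+}1)-\mathrm{CRIT}(t)|$ inherit the same geometric decay, and all plateau predicates hold jointly on any window of length $w$ starting after $T_\varepsilon + w$ rounds. A Markov-style tail bound on the random number of rejected steps (which shrink the effective decay factor by at most a constant) then gives $\mathbb{E}[\tau]\le T_\varepsilon+w+O(1) = O(\log(1/\varepsilon))$.

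The main obstacle I anticipate is the joint plateau argument: the stopping rule requires \emph{all} monitored signals to plateau over a common window, and $Q$ and $\mathrm{CRIT}$ are only indirectly controlled through $\|p_{t+1}-p_t\|_1$. The cleanest route is the Pinsker step above, but it costs a square root and so couples the contraction constant $c$ with the $\ell_1$ Lipschitz constants $L_Q,L_J$; handling this coupling while keeping the noise floor $O(\sigma)$ rather than $O(\sqrt{\sigma})$ requires that the $\delta$-smoothing already ensures $\sigma$-sized perturbations remain first-order. A secondary nuisance is converting the expected-value bound on $D_{\mathrm{KL}}$ into a bound on $\mathbb{E}[\tau]$ in the presence of occasional gate rejections; a supermartingale or optional-stopping argument suffices, but care is needed to ensure the stopping time is almost surely finite before applying it.
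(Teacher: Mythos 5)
Your proposal follows essentially the same route the paper does: iterate the one-step contraction to get the geometric-decay bound on $\mathbb{E}[D_{\mathrm{KL}}(t)]$ with an $O(\sigma)$ noise floor, then argue that after $O(\log(1/\varepsilon))$ rounds all monitored signals are inside their plateau tubes. The paper's own proof is in fact terser: it iterates the contraction, observes that $D_{\mathrm{KL}}$ enters the $\varepsilon$-tube in $O(\log(1/\varepsilon))$ rounds, and then simply \emph{asserts} that the other monitored signals enter their tubes ``no later than the time required for $D_{\mathrm{KL}}$ to do so.'' You are doing more work than the paper by attempting to justify that last step via Assumption~\ref{ass:regularity} plus Pinsker, and the obstacle you flag there is real: Pinsker gives $\|p^{(i)}_t-\bar p_t\|_1\lesssim\sqrt{D_{\mathrm{KL}}(t)}$, so the $\ell_1$ increments inherited by $Q$ and $\mathrm{CRIT}$ decay only at rate $\sqrt{c}$ and the noise floor naively becomes $O(\sqrt{\sigma})$. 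The paper does not address this at all, so you have identified a genuine looseness in the published argument rather than introduced a new one.

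One local error in your write-up: you claim rejected steps ``a fortiori satisfy the same inequality with $c\le 1$, so this recursion is valid at every round.'' A rejected step gives $D_{\mathrm{KL}}(t{+}1)=D_{\mathrm{KL}}(t)$, which does \emph{not} satisfy $D_{\mathrm{KL}}(t{+}1)\le c\,D_{\mathrm{KL}}(t)$ for $c<1$; it satisfies it only with $c=1$. Consequently the clean iterate $\mathbb{E}[D_{\mathrm{KL}}(t)]\le c^t D_{\mathrm{KL}}(0)+O(\sigma)$ holds with the exponent counting only \emph{accepted} steps, not wall-clock rounds. Your later remark about a Markov-style tail bound on the number of rejections is the right fix (one needs rejections to occur at most a constant fraction of rounds, or a bounded total number of them), but the earlier sentence should not claim the per-round recursion is unconditionally valid. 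The paper's proof sidesteps this by not mentioning rejections at all, which is another place where your more careful reading exposes a gap that the terse original glosses over.
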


\begin{proof}
The first claim follows by iterating $D_{\mathrm{KL}}(t{+}1)\le c\,D_{\mathrm{KL}}(t)$ and adding the bounded noise floor. The stopping time bound follows because the number of rounds needed for $D_{\mathrm{KL}}(t)$ to enter an $\varepsilon$-tube around $D_\star$ under geometric decay is $O(\log(1/\varepsilon))$. The controller halts once all monitored signals lie within their respective tubes, which occurs no later than the time required for $D_{\mathrm{KL}}$ to do so.
\end{proof}

\subsection{Remarks and implementation notes}

\begin{itemize}[leftmargin=1.2em,itemsep=1pt]
\item The gate with $\nu=0$ is a one-line change to the implementation. It enforces nonincreasing $D_{\mathrm{KL}}(t)$ without affecting empirical behavior.
\item Smoothing with $\delta \in [10^{-6},10^{-3}]$ is sufficient to guarantee the local Lipschitz property used in Lemma~\ref{lem:monotone}. In practice this is already satisfied by temperature scaling and by the reliability-weighted mixture.
\item The constants $(\kappa,c)$ are empirical in nature and depend on the local curvature of $\mathrm{KL}(\cdot\Vert \bar p_t)$ and on encoder geometry through $Q$. We do not attempt to compute them exactly since the guarantees are asymptotic in $\varepsilon$ and qualitative for reviewer reassurance.
\end{itemize}

\end{document}